\newcommand{\model}{LieSD}
\journal{Neural Networks}
\begin{document}

\begin{frontmatter}



\title{Symmetry Discovery for Different Data Types} 


\author[a]{Lexiang Hu}
\ead{hulx@stu.pku.edu.cn}
\author[a]{Yikang Li}
\ead{liyk18@pku.edu.cn}
\author[a,b,c]{Zhouchen Lin\corref{d}}
\ead{zlin@pku.edu.cn}

\address[a]{State Key Lab of General AI, School of Intelligence Science and Technology, Peking University}
\address[b]{Institute for Artificial Intelligence, Peking University}
\address[c]{Pazhou Laboratory (Huangpu), Guangzhou, Guangdong, China}
\cortext[d]{Corresponding author}

\begin{abstract}
Equivariant neural networks incorporate symmetries into their architecture, achieving higher generalization performance. However, constructing equivariant neural networks typically requires prior knowledge of data types and symmetries, which is difficult to achieve in most tasks. In this paper, we propose \model, a method for discovering symmetries via trained neural networks which approximate the input-output mappings of the tasks. It characterizes equivariance and invariance (a special case of equivariance) of continuous groups using Lie algebra and directly solves the Lie algebra space through the inputs, outputs, and gradients of the trained neural network. Then, we extend the method to make it applicable to multi-channel data and tensor data, respectively. We validate the performance of \model~on tasks with symmetries such as the two-body problem, the moment of inertia matrix prediction, and top quark tagging. Compared with the baseline, \model~can accurately determine the number of Lie algebra bases without the need for expensive group sampling. Furthermore, \model~can perform well on non-uniform datasets, whereas methods based on GANs fail.
\end{abstract}



\begin{keyword}
Equivariant networks \sep Symmetry discovery


\end{keyword}

\end{frontmatter}



\section{Introduction}
\label{sec:introduction}

Symmetries in different data types play important roles in deep learning. Many recent works embed symmetries into network structures \citep{cohen2016steerable,weiler2018learning,weiler2019general}, not only improving model generalization but also reducing the number of parameters, which significantly enhances training performance on specific tasks. For instance, convolutional neural networks \citep{lecun1998gradient} introduce the translational symmetry in image data, outperforming multilayer perceptrons in computer vision. Group equivariant convolutional networks \citep{cohen2016group} incorporate the rotational symmetry of image data. Equivariant multilayer perceptrons \citep{finzi2021practical} propose a method for constructing networks with more general Lie group symmetries.

However, these methods for constructing equivariant networks all require prior knowledge of symmetries. For example, in the image classification problem, translational and rotational invariance stems from humans' prior knowledge of the task. When faced with complex tasks and messy data, such as physical dynamic systems, it is very difficult to know symmetries in advance, and embedding incorrect symmetries can lead to a significant decrease in network performance.

A series of works on symmetry discovery emerge subsequently \citep{dehmamy2021automatic,romero2022learning,moskalev2022liegg,tegner2023self}. Augerino \citep{benton2020learning} parameterizes group transformations and discovers symmetries in tasks by optimizing these parameters during training. However, parameterization requires a rough understanding of the form of symmetries prior, which limits it to finding sub-group symmetries of given group transformations. Data augmentation can also incur significant computational overhead.

LieGAN \citep{yang2023generative} and LaLiGAN \citep{yang2023latent} directly search for symmetries from the data by bringing the distributions of original and transformed data closer. However, this method demands a high level of symmetry in the distribution of the dataset, which is difficult to achieve in the real world. For example, in facial recognition datasets, most images are frontal faces, which leads to insufficient representation of the rotational symmetry. Additionally, in datasets of particle motion trajectories, particles may also be concentrated in specific regions rather than uniformly distributed throughout the entire space. LieGAN and LaLiGAN also need to specify the coefficient distribution of the Lie algebra bases, since they require group sampling. Furthermore, incorrect setting of the number of Lie algebra bases can lead to them learning incorrect Lie algebra bases.

In this paper, we propose a method based on trained neural networks to discover symmetries, not only invariance but also equivariance. It can identify single-connected Lie group symmetries without relying on uniform datasets. We first propose a theorem to demonstrate that equivariance of the network can be measured solely based on the network's inputs, outputs, gradients, and Lie algebra bases, without the need for data augmentation or group sampling. In practice, we need a trained neural network to approximate the true mapping of the task. Then, based on this theorem, we construct and solve a system of linear equations to obtain orthogonal Lie algebra bases, thereby discovering symmetries. We perform SVD on the coefficient matrix of the linear equations, where the number of zero singular values corresponds to the number of Lie algebra bases. Moreover, we extend our method to the multi-channel and tensor cases, whereas previous symmetry discovery methods only consider the single-channel vector case. The framework of our Lie algebra based symmetry discovery (\model) method is shown in Figure~\ref{fig:introduction}.

\begin{figure}[ht]
	\centering
	\includegraphics[width=\textwidth]{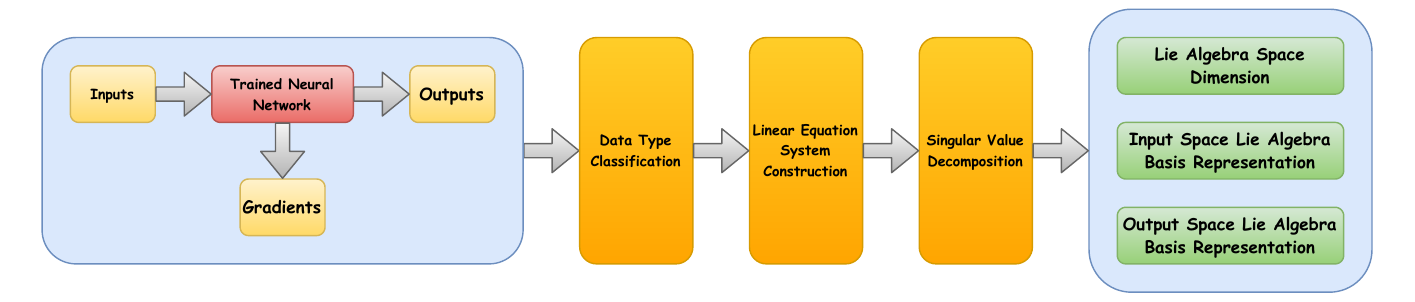}
	\caption{The framework of \model. We first train a neural network on the task to approximate the input-output mapping, and then use its inputs, outputs, and gradients for symmetry discovery. Adapted to different data types, \model~can solve the Lie algebra space, including the dimension of the Lie algebra space and the Lie algebra basis representations.}
	\label{fig:introduction}
\end{figure}

In summary, our contributions are as follows:

\begin{itemize}
	
	\item We provide a theorem that measures equivariance in the neural network using its inputs, outputs, and gradients, without the need for data augmentation or group sampling.
	
	\item We propose \model, a method for solving the Lie algebra space, thus discovering symmetries via trained neural networks. The dimension of the Lie algebra space can be determined by the number of zero singular values.
	
	\item We extend \model~to discover symmetries in multi-channel and tensor data.
	
	\item We verify that \model~can correctly discover symmetries in tasks such as the two-body problem, the moment of inertia matrix prediction, top quark tagging, and outperform the baseline, especially when the datasets are not uniform.
	
\end{itemize}

\section{Related work}
\label{sec:related-work}

\subsection{Equivariant networks}

Many recent works have embedded equivariance into network structures layerwise, which improves their generalization on specific tasks. G-CNNs \citep{cohen2016group} and steerable CNNs \citep{cohen2016steerable} introduce discrete group equivariance into CNNs. SFCNNs \citep{weiler2018learning} and E(2)-Equivariant Steerable CNNs \citep{weiler2019general} extend discrete group equivariance to continuous cases, such as arbitrary angle rotations in group $\mathrm{SO(2)}$. 3D Steerable CNNs \citep{weiler20183d} extends the group action space to $\mathbb{R}^3$. EMLP \citep{finzi2021practical} and affConv/homConv \citep{macdonald2022enabling} respectively embed arbitrary Lie group equivariance into MLP and CNNs. Furthermore, a series of works \citep{shen2020pdo,shen2021pdo,shen2022pdo,he2022neural,li2024affine} construct equivariant networks based on partial differential operators.

\subsection{Symmetry discovery}

Since the construction of equivariant networks requires prior knowledge of equivariance, many subsequent works attempt to discover symmetries. Some works are based on neural networks to detect symmetries. Augerino \citep{benton2020learning} parametrizes group actions. It performs augmentation on input data, then takes the average of the results as the final output of the network. However, it requires strong prior knowledge of group actions, namely the known parameterization of group actions. \citet{krippendorf2020detecting} uses the embedding layer of the neural network to identify orbits of the symmetries in the input. L-conv \citep{dehmamy2021automatic} can automatically discover symmetries and serve as a building block to construct group equivariant feedforward architecture. \citet{moskalev2022liegg} propose a method for extracting invariance from neural networks, but it cannot handle situations of equivariance. \citet{van2024learning} combine non-equivariant networks with equivariant networks. It assumes that the parameters of the two parts satisfy a given probability distribution, where the hyperparameters of the distribution can reflect the amount of equivariance. Similar to partial G-CNNs \citep{romero2022learning}, it can handle relaxed equivariant tasks well, but still requires specifying group transformations prior.

Some other works are based on dataset to discover symmetries. LieGAN \citep{yang2023generative} uses generative adversarial training to bring the data distributions before and after transformation closer, where the generator is used to generate group actions through group sampling. It eventually learns a set of orthogonal Lie algebra bases, but cannot accurately determine their number. LaLiGAN \citep{yang2023latent} and \citet{tegner2023self} use an encoder-decoder architecture to map the dataset to a latent space with linear symmetries, which can indirectly discover nonlinear symmetries in the original space. \citet{zhou2020meta} present a method for learning and embedding equivariance into networks by learning corresponding parameter sharing patterns from data. SGM \citep{allingham2024generative} captures symmetries under affine and color transformations based on generative models. These methods, based on dataset to discover symmetries, rely on uniform data distribution, while our work can perform well on non-uniform datasets.

\section{Background}

Before describing our method, we will first introduce some preliminary knowledge of group theory.

\subsection{Lie group and Lie algebra}

A Lie group $G$ is a group with a smooth manifold structure. A Lie algebra $\mathfrak{g}$ is a vector space used to describe the local structure near the identity element of a Lie group. For a simply connected Lie group, any group element $g \in G$ can be obtained from a Lie algebra element $A \in \mathfrak{g}$ through the exponential map $\mathrm{exp}: \mathfrak{g} \rightarrow G$, i.e., $g = \mathrm{exp}(A) = \sum_{k=0}^{\infty} \frac{A^k}{k!}$. Decomposing the space where the Lie algebra resides into several Lie algebra bases, we obtain:
\begin{equation}
	\label{eq:group}
	\forall g \in G: \quad g = \mathrm{exp} \left( \sum_{i=1}^D \alpha_i A_i \right),		
\end{equation}
where $D$ is the dimension of the Lie algebra space, $\{A_i\}_{i=1}^D$ are the Lie algebra bases, and $\{\alpha_i\}_{i=1}^D$ denotes their corresponding coefficients.

\subsection{Group representation}

Group representation describes how group elements act on a vector space through linear mappings. Formally, given a simply connected Lie group $G$ and an $n$-dimensional vector space $\mathcal{X}$, a group representation $\rho_{\mathcal{X}}: G \rightarrow GL(n)$ maps group elements to $n$-dimensional invertible matrices. For $g_1, g_2 \in G$ it satisfies $\rho_{\mathcal{X}}(g_1 g_2) = \rho_{\mathcal{X}}(g_1) \rho_{\mathcal{X}}(g_2)$. Similarly, the Lie algebra representation can be defined as $\mathrm{d} \rho_{\mathcal{X}}: \mathfrak{g} \rightarrow \mathfrak{gl}(n)$. We can specialize equation~\eqref{eq:group} to the vector space:
\begin{equation}
	\label{eq:representation}
	\forall g \in G: \quad \rho_{\mathcal{X}}(g) = \mathrm{exp} \left( \sum_{i=1}^D \alpha_i \mathrm{d} \rho_{\mathcal{X}}(A_i) \right).
\end{equation}

\subsection{Equivariance and invariance}

Symmetries can be divided into equivariance and invariance. Given a group $G$, with its group representations on the input space $\mathcal{X} \subseteq \mathbb{R}^n$ and output space $\mathcal{Y} \subseteq \mathbb{R}^m$ denoted as $\rho_{\mathcal{X}}: G \rightarrow GL(n)$ and $\rho_{\mathcal{Y}}: G \rightarrow GL(m)$ respectively. A function $f: \mathcal{X} \rightarrow \mathcal{Y}$ is equivariant if $\forall g \in G, x \in \mathcal{X} : \rho_{\mathcal{Y}}(g) f(x) = f(\rho_{\mathcal{X}}(g) x)$. In particular, when $\rho_{\mathcal{Y}}(g) = I_m$, we call the function $f$ invariant.

\section{Measuring equivariance in neural networks}

We need an efficient method to determine the equivariance of a trained neural network with respect to a given group. We consider the case of a simply connected Lie group, where group elements can be expressed as in equation~\eqref{eq:group}. Naturally, we can perform data augmentation for testing, but this requires sampling of group elements. In equation~\eqref{eq:group}, the Lie algebra bases $\{A_i\}_{i=1}^D$ can determine the information of the group, and different coefficients $\{\alpha_i\}_{i=1}^D$ correspond to different group elements, which means that we need to sample the coefficients. This method requires specifying the distribution of coefficients prior and entails significant computational cost. The following theorem tells us that equivariance in a neural network can be measured through its inputs, outputs, gradients, and Lie algebra bases, without the need for data augmentation.

\begin{restatable}{theorem}{measure}
	\label{thm:measure}
	
	Given a simply connected Lie group $G$, its group representations on the input space $\mathcal{X} \subseteq \mathbb{R}^n$ and output space $\mathcal{Y} \subseteq \mathbb{R}^m$ are denoted respectively as $\rho_{\mathcal{X}}(g): G \rightarrow GL(n)$ and $\rho_{\mathcal{Y}}(g): G \rightarrow GL(m)$. The function $f: \mathcal{X} \rightarrow \mathcal{Y}$ is equivariant:
	\begin{equation}
		\label{eq:measure1}
		\forall g \in G, x \in \mathcal{X}: \quad \rho_{\mathcal{Y}}(g) f(x) = f(\rho_{\mathcal{X}}(g) x),
	\end{equation}
	if and only if the corresponding Lie algebra bases of $\rho_{\mathcal{X}}(g)$ and $\rho_{\mathcal{Y}}(g)$ in equation~\eqref{eq:representation} satisfy:
	\begin{equation}
		\label{eq:measure2}
		\forall i \in \{1, 2, \dots, D\}, x \in \mathcal{X}: \quad \mathrm{d} \rho_{\mathcal{Y}}(A_i) f(x) = \nabla f(x) \mathrm{d} \rho_{\mathcal{X}}(A_i) x.
	\end{equation}
	
\end{restatable}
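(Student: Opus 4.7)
The plan is to prove both implications by exploiting the one-parameter subgroup $g(t) = \exp(tA)$ for $A$ a Lie algebra element, which bridges the global group equivariance condition with the infinitesimal (Lie algebra) condition. In both directions the central tool is differentiation of the equivariance relation along this curve, combined with the fact that $\rho_{\mathcal{X}}(\exp(tA)) = \exp(t\,\mathrm{d}\rho_{\mathcal{X}}(A))$ and the analogous identity for $\mathcal{Y}$.

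For the forward direction, I would fix an index $i$ and specialize equation~\eqref{eq:measure1} to $g = \exp(t A_i)$, yielding
\[
\exp\!\bigl(t\,\mathrm{d}\rho_{\mathcal{Y}}(A_i)\bigr)\,f(x) \;=\; f\bigl(\exp(t\,\mathrm{d}\rho_{\mathcal{X}}(A_i))\,x\bigr).
\]
Differentiating both sides with respect to $t$ at $t = 0$, the left-hand side gives $\mathrm{d}\rho_{\mathcal{Y}}(A_i)\,f(x)$, and the right-hand side, by the chain rule, gives $\nabla f(x)\,\mathrm{d}\rho_{\mathcal{X}}(A_i)\,x$. This is equation~\eqref{eq:measure2}.

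For the backward direction, given an arbitrary Lie algebra element $A = \sum_{i=1}^D \alpha_i A_i$, linearity of $\mathrm{d}\rho_{\mathcal{X}}$ and $\mathrm{d}\rho_{\mathcal{Y}}$ in $A$, together with the pointwise linearity in $x$, lets us upgrade the hypothesis to $\mathrm{d}\rho_{\mathcal{Y}}(A) f(x) = \nabla f(x)\,\mathrm{d}\rho_{\mathcal{X}}(A)\,x$ for every $x$. I would then introduce the auxiliary curve
\[
\phi(t) \;=\; \rho_{\mathcal{Y}}\bigl(\exp(-tA)\bigr)\,f\bigl(\rho_{\mathcal{X}}(\exp(tA))\,x\bigr),
\]
and compute $\phi'(t)$ using the product and chain rules. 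The two terms combine, via the hypothesis applied at the point $x_t = \rho_{\mathcal{X}}(\exp(tA))x$ (together with the fact that $\mathrm{d}\rho_{\mathcal{X}}(A)$ commutes with $\rho_{\mathcal{X}}(\exp(tA))$), to give $\phi'(t) \equiv 0$. Since $\phi(0) = f(x)$, we conclude $f(\rho_{\mathcal{X}}(\exp(tA))x) = \rho_{\mathcal{Y}}(\exp(tA))f(x)$ for all $t$ and all $A \in \mathfrak{g}$.

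Finally, to pass from the exponential image to the whole group, I would invoke the standard fact that for any connected Lie group (and a simply connected group is in particular connected) the image of the exponential map generates $G$, so every $g \in G$ factors as a finite product $\exp(B_1)\cdots\exp(B_k)$. Because equivariance is preserved under composition (using that $\rho_{\mathcal{X}}$ and $\rho_{\mathcal{Y}}$ are group homomorphisms), the per-exponential equivariance established above extends to all of $G$, giving equation~\eqref{eq:measure1}. The main obstacle I anticipate is the second-direction calculation of $\phi'(t) = 0$: one must carefully verify that $\mathrm{d}\rho_{\mathcal{X}}(A)$ and $\rho_{\mathcal{X}}(\exp(tA))$ commute on $x_t$ and that the two contributions from differentiating the $\rho_{\mathcal{Y}}$-prefactor and the composition with $f$ cancel exactly; beyond that, everything reduces to elementary chain-rule and exponential-series manipulations.
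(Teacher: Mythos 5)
Your proposal is correct and follows essentially the same route as the paper: the forward direction differentiates the equivariance relation along a one-parameter subgroup $g(t)=\exp(tA_i)$ at $t=0$, and the backward direction lifts the infinitesimal identity to a one-parameter family by a Gronwall-type argument. Your $\phi(t)=\rho_{\mathcal{Y}}(\exp(-tA))f(\rho_{\mathcal{X}}(\exp(tA))x)$ with $\phi'\equiv 0$ is algebraically equivalent to the paper's $\tilde{h}(\beta)$ satisfying the linear ODE $\tilde h'=\mathrm{d}\rho_{\mathcal{Y}}(B)\tilde h$ with $\tilde h(0)=0$ (indeed $\phi(t)=f(x)-\exp(-t\,\mathrm{d}\rho_{\mathcal{Y}}(B))\tilde h(t)$), and your closing remark about generating $G$ by products of exponentials is harmless but unnecessary here since the paper already posits surjectivity of $\exp$ in equation~\eqref{eq:group}.
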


The complete proof of Theorem~\ref{thm:measure} is provided in \ref{sec:proof-measure}. Note that equation~\eqref{eq:measure2} does not contain coefficients of the Lie algebra bases $\{\alpha_i\}_{i=1}^D$. Hence, to measure equivariance, we solely require knowledge of the group, namely the Lie algebra bases $\{A_i\}_{i=1}^D$. In practice, we directly use training data to obtain the inputs, outputs, and gradients of the neural network.

To discover symmetries, we need to accurately express the function from input to output. However, since the training set is discrete, gradient information cannot be obtained. Therefore, we train a neural network, which can accurately map the input of the training set to the output and is capable of capturing the correct gradient information. Moreover, since we use information from the training set to discover symmetries, we only need it to generalize within the training space $\mathcal{X}$, without requiring it to generalize across the entire space $\mathbb{R}^n$. Compared with methods based on GANs that directly obtain symmetries from the dataset \citep{desai2022symmetry,yang2023generative,yang2023latent}, we use local information of each data point rather than the overall data distribution. Therefore, our method can perform better on non-uniform datasets.

\section{Symmetry discovery for single-channel vector data}
\label{sec:symmetry-discovery}

Our final goal is to discover symmetries via trained neural network, which amounts to solving the space where the Lie algebra resides. A general approach is to optimize the parameters of the Lie algebra bases using gradient descent, where the loss function can be defined as $\mathcal{L}_{equiv} (\{A_i\}_{i=1}^D) = \frac{1}{D} \sum_{i=1}^D ||\mathrm{d} \rho_{\mathcal{Y}}(A_i) f(x) - \nabla f(x) \mathrm{d} \rho_{\mathcal{X}}(A_i) x||_2$ according to Theorem~\ref{thm:measure}. Although we can ensure the orthogonality of Lie algebra bases by adding a regularization term $l_{chreg}(\{A_i\}_{i=1}^D) = \sum_{1 \leq i < j \leq D} R_{ch}(A_i, A_j)$ \citep{yang2023generative}, we cannot exactly determine the dimension $D$ of the Lie algebra space, which may lead to unexpected results.

We propose a mathematical method for solving the Lie algebra bases. Note that equation~\eqref{eq:measure2} is linear with respect to Lie algebra representation. We can transform it into a system of linear equations for solution, and determine the dimension of the Lie algebra space by the number of zero singular values of the coefficient matrix. We summarize the method as the following theorem.

\begin{restatable}{theorem}{single}
	\label{thm:single}
	Suppose that the input space $\mathcal{X} \subseteq \mathbb{R}^n$ and the output space $\mathcal{Y} \subseteq \mathbb{R}^m$ are both single-channel vector spaces. The function $f: \mathcal{X} \rightarrow \mathcal{Y}$ is equivariant with respect to a simply connected Lie group $G$. Sample $N$ data points $\mathcal{D} = \{x^{(i)}\}_{i=1}^N$ from the input space $\mathcal{X}$, and let:
	\begin{equation}
		\label{eq:single}
		C(\mathcal{D}) = 
		\begin{bmatrix}
			-\nabla f(x^{(1)}) \otimes x^{(1) T} & I_m \otimes f^T(x^{(1)}) \\
			-\nabla f(x^{(2)}) \otimes x^{(2) T} & I_m \otimes f^T(x^{(2)}) \\
			\vdots & \vdots \\
			-\nabla f(x^{(N)}) \otimes x^{(N) T} & I_m \otimes f^T(x^{(N)})
		\end{bmatrix}.
	\end{equation}

	Assume that $\mathcal{D}$ makes $\mathrm{rank}(C(\mathcal{D}))$ reach its maximum. Formally speaking, $\forall x \in \mathcal{X}:\mathrm{rank}(C(\mathcal{D} \cup \{x\})) = \mathrm{rank}(C(\mathcal{D}))$. Then the number of zero singular values of $C(\mathcal{D})$ is the dimension of the Lie algebra space of group $G$, and the right singular vectors corresponding to zero singular values are the vector expansion forms of the orthogonal Lie algebra basis representations on $\mathcal{X}$ and $\mathcal{Y}$.
\end{restatable}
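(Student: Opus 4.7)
The plan is to recognize the system $C(\mathcal{D})\,v = 0$ as a sample-wise stacking of equation~\eqref{eq:measure2} after vectorizing the unknown Lie algebra representations, and then to read off both the dimension count and the orthogonal bases from the SVD of $C(\mathcal{D})$.

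First, I would vectorize the infinitesimal equivariance condition. Treating a candidate pair $L_{\mathcal{X}} \in \mathbb{R}^{n \times n}$ and $L_{\mathcal{Y}} \in \mathbb{R}^{m \times m}$ as the unknown and applying the standard identity $\mathrm{vec}(ABC) = (C^{T} \otimes A)\,\mathrm{vec}(B)$ (with the Kronecker/vectorization convention that reproduces the precise block structure in~\eqref{eq:single}), the equation $\nabla f(x)\,L_{\mathcal{X}}\,x - L_{\mathcal{Y}}\,f(x) = 0$ becomes $[-\nabla f(x)\otimes x^{T},\; I_m\otimes f^{T}(x)]\,v = 0$, where $v$ stacks $\mathrm{vec}(L_{\mathcal{X}})$ on top of $\mathrm{vec}(L_{\mathcal{Y}})$. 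Concatenating this row block over $x^{(1)},\dots,x^{(N)}$ reproduces $C(\mathcal{D})\,v = 0$ exactly, so $\ker C(\mathcal{D})$ consists of precisely those $v$ whose reshaped components satisfy equation~\eqref{eq:measure2} at every sampled point.

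Next, I would promote ``at every sampled point'' to ``at every $x \in \mathcal{X}$'' via the maximality hypothesis. Since $\mathrm{rank}(C(\mathcal{D} \cup \{x\})) = \mathrm{rank}(C(\mathcal{D}))$ for every $x \in \mathcal{X}$, no further sample contributes an independent linear constraint, so $\ker C(\mathcal{D}) = \bigcap_{x \in \mathcal{X}} \ker C(\{x\})$. Applying Theorem~\ref{thm:measure} identifies this null space with the set of pairs $(L_{\mathcal{X}}, L_{\mathcal{Y}})$ that serve as Lie algebra representations of an element under which $f$ is equivariant; under the theorem's standing assumption that $G$ captures every such infinitesimal equivariance of $f$, this space is spanned by $\{(\mathrm{d}\rho_{\mathcal{X}}(A_i), \mathrm{d}\rho_{\mathcal{Y}}(A_i))\}_{i=1}^{D}$ and has dimension exactly $D$. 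Feeding this into the SVD, the number of zero singular values equals the nullity $D$, and the corresponding right singular vectors form an orthonormal basis of $\ker C(\mathcal{D})$ whose unstacked $n^2$/$m^2$ blocks give the claimed orthogonal Lie algebra basis representations on $\mathcal{X}$ and $\mathcal{Y}$.

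The main obstacle is the dimension count: equating $\dim \ker C(\mathcal{D})$ with $D$ implicitly requires that $\mathrm{d}\rho_{\mathcal{X}}$ and $\mathrm{d}\rho_{\mathcal{Y}}$ be jointly faithful on $\mathfrak{g}$ (so distinct algebra elements produce distinct stacked vectors) and that $G$ exhaust the infinitesimal symmetries of $f$ (so no unexpected directions sneak into $\ker C(\mathcal{D})$). A secondary, purely mechanical difficulty is bookkeeping the Kronecker/vectorization convention in the first step so that the derived matrix entries line up exactly with the block layout in~\eqref{eq:single}; the choice of column- versus row-stacking and the two Kronecker orderings must be fixed consistently for the signs and transposes to match.
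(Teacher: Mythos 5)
Your proof is correct and follows essentially the same route as the paper: vectorize the infinitesimal equivariance condition, stack the resulting linear constraints over the sample to get $C(\mathcal{D})v=0$, invoke the rank-maximality hypothesis to equate $\ker C(\mathcal{D})$ with the space of all admissible Lie algebra representation pairs, and read off the dimension and orthonormal bases from the SVD. Two small remarks: the paper uses a row-major vectorization, which is why the Kronecker factors appear as $\nabla f(x)\otimes x^T$ rather than the $x^T\otimes\nabla f(x)$ your column-major identity would give (you correctly flag this as a convention to pin down); and the faithfulness/exhaustiveness caveat you raise is real but is left implicit in the paper too, absorbed into the phrase ``the Lie algebra space of group $G$'' in the theorem statement.
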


To prove Theorem~\ref{thm:single}, we substitute the data samples into equation~\eqref{eq:measure2} and transform it into a system of linear equations:
\begin{equation}
	\label{eq:linear0}
	C(\mathcal{D}) v = 
	\begin{bmatrix}
		-\nabla f(x^{(1)}) \otimes x^{(1) T} & I_m \otimes f^T(x^{(1)}) \\
		-\nabla f(x^{(2)}) \otimes x^{(2) T} & I_m \otimes f^T(x^{(2)}) \\
		\vdots & \vdots \\
		-\nabla f(x^{(N)}) \otimes x^{(N) T} & I_m \otimes f^T(x^{(N)})
	\end{bmatrix}
	\begin{bmatrix}
		\mathrm{vec}(\mathrm{d} \rho_{\mathcal{X}}(A)) \\
		\mathrm{vec}(\mathrm{d} \rho_{\mathcal{Y}}(A))
	\end{bmatrix}
	= 0.
\end{equation}
Then, the problem transforms into finding the bases $\{v_i\}_{i=1}^D$ of the subspace spanned by $v$ that satisfies equation~\eqref{eq:linear0}. The complete proof of Theorem~\ref{thm:single} can be found in \ref{sec:proof-single}. 

In practice, we directly use the training set as the data sample. Due to factors such as data noise and model error, singular values are not exactly zero. So we compare the relative magnitudes of singular values and select those that are much closer to zeros.

\paragraph{Efficiently computing}

In equation~\eqref{eq:single}, we notice that $C(\mathcal{D}) \in \mathbb{R}^{(N \times m) \times (n^2 + m^2)}$. Therefore, when the size $N$ of the training set is large, directly computing the coefficient matrix $C(\mathcal{D})$ will incur significant storage overhead. In practice, we only compute $C(\mathcal{D})^T C(\mathcal{D}) \in \mathbb{R}^{(n^2 + m^2) \times (n^2 + m^2)}$:
\begin{align*}
	& C(\mathcal{D})^T C(\mathcal{D}) = \\ &
	\begin{bmatrix}
		\sum_{i=1}^N \nabla f(x^{(i)})^T \nabla f(x^{(i)}) \otimes x^{(i)} x^{(i) T} & - \sum_{i=1}^N \nabla f(x^{(i)})^T \otimes x^{(i)} f^T(x^{(i)}) \\
		-\sum_{i=1}^N \nabla f(x^{(i)}) \otimes f(x^{(i)}) x^{(i) T} & \sum_{i=1}^N I_m \otimes f(x^{(i)}) f^T(x^{(i)})
	\end{bmatrix}.
\end{align*}
By calculating the eigenvalues and eigenvectors of $C(\mathcal{D})^T C(\mathcal{D})$, we can obtain the singular values and right singular vectors of $C(\mathcal{D})$. Formally, for $C(\mathcal{D}) = U \Sigma V^T$, we have $C(\mathcal{D})^T C(\mathcal{D}) = V \Sigma^2 V^T$.

\section{Extension to different data types}

\subsection{Why the extension is necessary}

We face a crucial question: what kind of symmetries do we want? Recall the motivation behind discovering symmetries, that is to aid the design of equivariant networks, enhance understanding of problems, or contribute to the discovery of scientific laws. Therefore, we need meaningful and interpretable symmetries. Take the two-body problem in LieGAN \citep{yang2023generative} as an example. For the symmetry that enables interactions between the position or momentum of different bodies, we cannot explain its actual physical significance. It is due to the origin of the dataset being at the center of mass and $m_1=m_2$, which results in $q_1=-q_2$ and $p_1=-p_2$. In other words, this symmetry arises from a specific property of the dataset rather than from inherent physical laws. Therefore, for different data types, we impose certain requirements on the form of the Lie algebra basis, which essentially uses our prior knowledge of the structure of the input and output spaces to ensure that symmetries are meaningful.

\subsection{Extension to multi-channel data}

We first discuss the case where both the input and output of the task are multi-channel vectors, which means that independent group transformations are performed within each channel. Formally, the vector space can be decomposed as $V = \bigoplus_{i=1}^c V_i = V_1 \oplus V_2 \oplus \dots \oplus V_c$. Correspondingly, the group representation and Lie algebra representation are $\rho_V(g) = \bigoplus_{i=1}^c \rho_{V_i}(g)$ and $\mathrm{d} \rho_V(A) = \bigoplus_{i=1}^c \mathrm{d} \rho_{V_i}(A)$ respectively, where $\oplus$ is defined as $A \oplus B = 
\begin{bmatrix}
	A & 0 \\
	0 & B
\end{bmatrix}
$ \citep{finzi2021practical}. Similarly, we can obtain the following theorem.

\begin{restatable}{theorem}{multi}
	\label{thm:multi}
	Suppose that the input space $\mathcal{X} = \bigoplus_{i=1}^{c_x} \mathcal{X}_i$ and the output space $\mathcal{Y} = \bigoplus_{i=1}^{c_y} \mathcal{Y}_i$ are both multi-channel vector spaces. The function $f: \mathcal{X} \rightarrow \mathcal{Y}$ is equivariant with respect to a simply connected Lie group $G$. Sample $N$ data points $\mathcal{D} = \{x^{(i)}\}_{i=1}^N$ from the input space $\mathcal{X}$, and let:
	\begin{equation}
		\label{eq:multi}
		C(\mathcal{D}) = 
		\begin{bmatrix}
			C_x^{(1)} & C_y^{(1)} \\
			C_x^{(2)} & C_y^{(2)} \\
			\vdots & \vdots \\
			C_x^{(N)} & C_y^{(N)}
		\end{bmatrix},
	\end{equation}
	where
	\begin{equation*}
		\begin{cases}
			C_x^{(i)} =
			\begin{bmatrix}
				- \nabla f_1(x_1^{(i)}) \otimes x_1^{(i) T} & - \nabla f_1(x_2^{(i)}) \otimes x_2^{(i) T} & \cdots & - \nabla f_1(x_{c_x}^{(i)}) \otimes x_{c_x}^{(i) T} \\
				- \nabla f_2(x_1^{(i)}) \otimes x_1^{(i) T} & - \nabla f_2(x_2^{(i)}) \otimes x_2^{(i) T} & \cdots & - \nabla f_2(x_{c_x}^{(i)}) \otimes x_{c_x}^{(i) T} \\
				\vdots & \vdots & \ddots & \vdots \\
				- \nabla f_{c_y}(x_1^{(i)}) \otimes x_1^{(i) T} & - \nabla f_{c_y}(x_2^{(i)}) \otimes x_2^{(i) T} & \cdots & - \nabla f_{c_y}(x_{c_x}^{(i)}) \otimes x_{c_x}^{(i) T} \\
			\end{bmatrix}, \\
			C_y^{(i)} = \mathrm{diag}[I_{m_1} \otimes f_1^T(x^{(i)}), I_{m_2} \otimes f_2^T(x^{(i)}), \dots, I_{m_{c_y}} \otimes f_{c_y}^T(x^{(i)})],
		\end{cases}
	\end{equation*}
	$x_i$ and $f_i(x)$ are the $i$-th channels of $x$ and $f(x)$ respectively, $\nabla f_i(x_j)$ is the gradient of $f_i(x)$ with respect to $x_j$, and $m_i$ represents the dimension of the $i$-th output channel.
	
	Assume that $\mathcal{D}$ makes $\mathrm{rank}(C(\mathcal{D}))$ reach its maximum. Formally speaking, $\forall x \in \mathcal{X}:\mathrm{rank}(C(\mathcal{D} \cup \{x\})) = \mathrm{rank}(C(\mathcal{D}))$. Then the number of zero singular values of $C(\mathcal{D})$ is the dimension of the Lie algebra space of group $G$, and the right singular vectors corresponding to zero singular values are the vector expansion forms of the orthogonal Lie algebra basis representations on $\{\mathcal{X}_i\}_{i=1}^{c_x}$ and $\{\mathcal{Y}_i\}_{i=1}^{c_y}$.
\end{restatable}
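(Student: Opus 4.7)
The plan is to mirror the single-channel argument in Theorem~\ref{thm:single} but carefully exploit the block-diagonal structure of the multi-channel representations. First I would invoke Theorem~\ref{thm:measure} at every sample $x^{(i)} \in \mathcal{D}$: since $G$ is simply connected, equivariance is equivalent to the Lie algebra identity $\mathrm{d}\rho_{\mathcal{Y}}(A) f(x^{(i)}) = \nabla f(x^{(i)}) \mathrm{d}\rho_{\mathcal{X}}(A) x^{(i)}$ for every $A \in \mathfrak{g}$. Substituting $\mathrm{d}\rho_{\mathcal{X}}(A) = \bigoplus_{j=1}^{c_x} \mathrm{d}\rho_{\mathcal{X}_j}(A)$ and $\mathrm{d}\rho_{\mathcal{Y}}(A) = \bigoplus_{k=1}^{c_y} \mathrm{d}\rho_{\mathcal{Y}_k}(A)$, and projecting onto each output channel $k$, I obtain
\begin{equation*}
    \mathrm{d}\rho_{\mathcal{Y}_k}(A)\, f_k(x^{(i)}) \;=\; \sum_{j=1}^{c_x} \nabla f_k(x_j^{(i)})\, \mathrm{d}\rho_{\mathcal{X}_j}(A)\, x_j^{(i)}.
\end{equation*}
The key structural observation is that the output channels decouple on the left (because $\mathrm{d}\rho_{\mathcal{Y}}$ is block-diagonal), while on the right every input channel may contribute to each output channel through the Jacobian blocks $\nabla f_k(x_j^{(i)})$.

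Next I would vectorize both sides using the same Kronecker identity employed in Theorem~\ref{thm:single}. Each summand $\nabla f_k(x_j^{(i)}) \mathrm{d}\rho_{\mathcal{X}_j}(A) x_j^{(i)}$ becomes $\bigl(\nabla f_k(x_j^{(i)}) \otimes x_j^{(i)T}\bigr) \mathrm{vec}\bigl(\mathrm{d}\rho_{\mathcal{X}_j}(A)\bigr)$, and $\mathrm{d}\rho_{\mathcal{Y}_k}(A) f_k(x^{(i)})$ becomes $\bigl(I_{m_k} \otimes f_k^T(x^{(i)})\bigr) \mathrm{vec}\bigl(\mathrm{d}\rho_{\mathcal{Y}_k}(A)\bigr)$. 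Concatenating the per-channel coefficients into the ordered unknown $v = [\mathrm{vec}(\mathrm{d}\rho_{\mathcal{X}_1}(A))^T, \ldots, \mathrm{vec}(\mathrm{d}\rho_{\mathcal{X}_{c_x}}(A))^T, \mathrm{vec}(\mathrm{d}\rho_{\mathcal{Y}_1}(A))^T, \ldots, \mathrm{vec}(\mathrm{d}\rho_{\mathcal{Y}_{c_y}}(A))^T]^T$, the constraints for sample $i$ across all output channels reproduce exactly the block row $[C_x^{(i)} \;\; C_y^{(i)}]$ of equation~\eqref{eq:multi}; the block-diagonal form of $C_y^{(i)}$ encodes the channel decoupling on the output side, while the dense block form of $C_x^{(i)}$ encodes the inter-channel mixing on the input side. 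Stacking over $i = 1,\ldots,N$ produces the system $C(\mathcal{D}) v = 0$.

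It then remains to translate the null space of $C(\mathcal{D})$ into Lie algebra information. Under the maximal-rank hypothesis on $\mathcal{D}$, the solution set of $C(\mathcal{D}) v = 0$ coincides with the set of $v$ for which the vectorized equivariance identity holds for every $x \in \mathcal{X}$; by Theorem~\ref{thm:measure} this is precisely the image of $\mathfrak{g}$ under the combined representation $\mathrm{d}\rho_{\mathcal{X}} \oplus \mathrm{d}\rho_{\mathcal{Y}}$, whose dimension is $D$. A standard SVD argument, identical to the one used for Theorem~\ref{thm:single}, then identifies the zero singular values of $C(\mathcal{D})$ with a basis of this null space, and the orthogonality of right singular vectors furnishes an orthogonal family of Lie algebra basis representations on the $\mathcal{X}_j$ and $\mathcal{Y}_k$ blocks simultaneously.

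The main obstacle I anticipate is the converse direction guaranteed by the maximal-rank assumption: one must argue that saturating the rank of $C(\mathcal{D})$ on a finite sample is equivalent to saturating it over all of $\mathcal{X}$, so that every $v$ in the null space genuinely corresponds to some $A \in \mathfrak{g}$ rather than to an accidental cancellation at the sampled points. A secondary but non-trivial bookkeeping point is that the Kronecker blocks inside $C_x^{(i)}$ must be ordered consistently with the block decomposition of $v$, and the block-diagonal form of $C_y^{(i)}$ must correctly reflect that $\mathrm{d}\rho_{\mathcal{Y}_k}(A)$ acts only on its own channel; both issues reduce, block by block, to the single-channel analysis already carried out for Theorem~\ref{thm:single}, but care with indices is essential.
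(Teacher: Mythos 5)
Your proposal is correct and follows essentially the same route as the paper: project equation~\eqref{eq:measure2} onto each output channel via the block-diagonal Lie algebra representations, vectorize each summand with the Kronecker identity $\nabla f_k(x_j)\,\mathrm{d}\rho_{\mathcal{X}_j}(A)\,x_j = (\nabla f_k(x_j)\otimes x_j^T)\,\mathrm{vec}(\mathrm{d}\rho_{\mathcal{X}_j}(A))$, stack over samples to form $C(\mathcal{D})v=0$, and then reduce the null-space/SVD analysis to the argument already given for Theorem~\ref{thm:single}. The obstacle you flag about the maximal-rank assumption and the per-block index bookkeeping is exactly the point the paper also defers to the single-channel proof, so your reasoning matches the intended argument.
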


The complete proof of Theorem~\ref{thm:multi} is provided in \ref{sec:proof-multi}. To save memory overhead, we compute $C(\mathcal{D})^T C(\mathcal{D})$ instead of directly storing the coefficient matrix $C(\mathcal{D})$ in equation~\eqref{eq:multi}:
\begin{equation}
	\label{eq:CTC-multi}
	C(\mathcal{D})^T C(\mathcal{D}) = 
	\begin{bmatrix}
		\sum_{i=1}^N C_x^{(i) T} C_x^{(i)} & \sum_{i=1}^N C_x^{(i) T} C_y^{(i)} \\
		\sum_{i=1}^N C_y^{(i) T} C_x^{(i)} & \sum_{i=1}^N C_y^{(i) T} C_y^{(i)}
	\end{bmatrix}.
\end{equation}

\subsection{Extension to tensor data}

Sometimes neural networks take high-dimensional tensors as inputs and outputs rather than vectors. For example, graph neural networks are used to process adjacency matrices representing graph-structured data \citep{kipf2016semi}, and some neural networks are employed for handling the neural weight space \citep{eilertsen2020classifying,unterthiner2020predicting,schurholt2021self}. Many works also attempt to encode symmetries in high-dimensional tensors \citep{satorras2021n,navon2023equivariant}. We now explain how our method for symmetry discovery can be generalized to the tensor case.

For the tensor space $T = \bigotimes_{i=1}^d V_i = V_1 \otimes V_2 \otimes \dots \otimes V_d$, its group representation and Lie algebra representation can be expressed as $\rho_V(g) = \bigotimes_{i=1}^d \rho_{V_i}(g)$ and $\mathrm{d} \rho_V(A) = \overline{\bigoplus}_{i=1}^d \mathrm{d} \rho_{V_i}(A)$, where $\overline{\oplus}$ is defined as $A \overline{\oplus} B = A \otimes I_{b} + I_{a} \otimes B$ \citep{finzi2021practical}. For simplicity, we consider the matrix case, i.e. $\mathcal{X} = \mathcal{X}_1 \otimes \mathcal{X}_2 = \mathbb{R}^{n_1 \times n_2}$ and $\mathcal{Y} = \mathcal{Y}_1 \otimes \mathcal{Y}_2 = \mathbb{R}^{m_1 \times m_2}$. 
The discovery of symmetries in high-dimensional tensors is analogous to matrices. Then we present the following theorem.

\begin{restatable}{theorem}{tensor}
	\label{thm:tensor}
	Suppose that the input space $\mathcal{X} = \mathcal{X}_1 \otimes \mathcal{X}_2 = \mathbb{R}^{n_1 \times n_2}$ and the output space $\mathcal{Y} = \mathcal{Y}_1 \otimes \mathcal{Y}_2 = \mathbb{R}^{m_1 \times m_2}$ are both matrix spaces. The function $F: \mathcal{X} \rightarrow \mathcal{Y}$ is equivariant with respect to a simply connected Lie group $G$. Sample $N$ data points $\mathcal{D} = \{X^{(i)}\}_{i=1}^N$ from the input space $\mathcal{X}$, and let:
	\begin{equation}
		\label{eq:tensor}
		C(\mathcal{D}) = 
		\begin{bmatrix}
			C_{x_1}^{(1)} & C_{x_2}^{(1)} & C_{y_1}^{(1)} & C_{y_2}^{(1)} \\
			C_{x_1}^{(2)} & C_{x_2}^{(2)} & C_{y_1}^{(2)} & C_{y_2}^{(2)} \\
			\vdots & \vdots & \vdots & \vdots \\
			C_{x_1}^{(N)} & C_{x_2}^{(N)} & C_{y_1}^{(N)} & C_{y_2}^{(N)}
		\end{bmatrix},
	\end{equation}
	where
	\begin{equation*}
		\begin{cases}
			C_{x_1}^{(i)} = -\sum_{k=1}^{n_2} \nabla f(X_{\cdot k}^{(i)}) \otimes X_{\cdot k}^{(i) T}, \quad C_{x_2}^{(i)} = -\sum_{k=1}^{n_1} \nabla f(X_{k \cdot}^{(i)}) \otimes X_{k \cdot}^{(i)}, \\
			C_{y_1}^{(i)} = I_{m_1} \otimes F^T(X^{(i)}), \quad C_{y_2}^{(i)} = 
			\begin{bmatrix}
				I_{m_2} \otimes F_{1 \cdot}(X^{(i)}) \\
				I_{m_2} \otimes F_{2 \cdot}(X^{(i)}) \\
				\vdots \\
				I_{m_2} \otimes F_{m_1 \cdot}(X^{(i)}) \\
			\end{bmatrix},
		\end{cases}
	\end{equation*}
	$f(X) \in \mathbb{R}^{(m_1 \times m_2) \times 1}$ is the vectorized form of the output matrix $F(X) \in \mathcal{Y}$, $X_{k \cdot} \in \mathbb{R}^{1 \times n_2}$ and $F_{k \cdot}(X) \in \mathbb{R}^{1 \times m_2}$ are the $k$-th row vectors of $X$ and $F(X)$ respectively, $X_{\cdot k} \in \mathbb{R}^{n_1 \times 1}$ is the $k$-th column vector of $X$, $\nabla f(X_{\cdot k}) \in \mathbb{R}^{(m_1 \times m_2) \times n_1}$ and $\nabla f(X_{k \cdot}) \in \mathbb{R}^{(m_1 \times m_2) \times n_2}$ denote the gradient of $f(X)$ with respect to $X_{\cdot k}$ and $X_{k \cdot}$ respectively.
	
	Assume that $\mathcal{D}$ makes $\mathrm{rank}(C(\mathcal{D}))$ reach its maximum. Formally speaking, $\forall x \in \mathcal{X}:\mathrm{rank}(C(\mathcal{D} \cup \{x\})) = \mathrm{rank}(C(\mathcal{D}))$. Then the number of zero singular values of $C(\mathcal{D})$ is the dimension of the Lie algebra space of group $G$, and the right singular vectors corresponding to zero singular values are the vector expansion forms of the orthogonal Lie algebra basis representations on $\{\mathcal{X}_i\}_{i=1}^2$ and $\{\mathcal{Y}_i\}_{i=1}^2$.
\end{restatable}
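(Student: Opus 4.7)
The plan is to reduce Theorem~\ref{thm:tensor} to Theorem~\ref{thm:measure} by specializing the latter to the tensor product structure of the input and output spaces, and then to linearize the resulting equation in the four unknown Lie algebra representations exactly as was done in Theorems~\ref{thm:single} and~\ref{thm:multi}. Writing $x = \mathrm{vec}(X)$ and $f(X) = \mathrm{vec}(F(X))$, Theorem~\ref{thm:measure} gives
\begin{equation*}
\mathrm{d}\rho_{\mathcal{Y}}(A) f(X) = \nabla f(X)\, \mathrm{d}\rho_{\mathcal{X}}(A)\, x,
\end{equation*}
and substituting the tensor structure $\mathrm{d}\rho_{\mathcal{X}}(A) = \mathrm{d}\rho_{\mathcal{X}_1}(A) \otimes I_{n_2} + I_{n_1} \otimes \mathrm{d}\rho_{\mathcal{X}_2}(A)$ together with the analogous expression for $\mathrm{d}\rho_{\mathcal{Y}}(A)$ yields four separate contributions, one for each of $\mathrm{d}\rho_{\mathcal{X}_1}(A)$, $\mathrm{d}\rho_{\mathcal{X}_2}(A)$, $\mathrm{d}\rho_{\mathcal{Y}_1}(A)$, $\mathrm{d}\rho_{\mathcal{Y}_2}(A)$. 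I would handle each of these with standard Kronecker identities such as $\mathrm{vec}(ABC) = (C^T \otimes A)\mathrm{vec}(B)$ to re-express each term as a coefficient block times the vectorization of the corresponding Lie algebra representation.

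The first input term involves $(\mathrm{d}\rho_{\mathcal{X}_1}(A) \otimes I_{n_2}) x$, which corresponds to applying $\mathrm{d}\rho_{\mathcal{X}_1}(A)$ columnwise to $X$; unpacking column-by-column and summing produces a coefficient of the form $\sum_{k=1}^{n_2} \nabla f(X_{\cdot k}) \otimes X_{\cdot k}^T$, matching $-C_{x_1}^{(i)}$ after moving it to the left-hand side. The second input term similarly produces a sum over the rows of $X$ and matches $-C_{x_2}^{(i)}$. On the output side, the piece $\mathrm{d}\rho_{\mathcal{Y}_1}(A) \otimes I_{m_2}$ acting on $f(X)$ becomes $I_{m_1} \otimes F^T(X)$ after applying $\mathrm{vec}(AB) = (B^T \otimes I)\mathrm{vec}(A)$ to $\mathrm{d}\rho_{\mathcal{Y}_1}(A) F(X)$, while the $I_{m_1} \otimes \mathrm{d}\rho_{\mathcal{Y}_2}(A)$ term assembles row-block-by-row-block into $C_{y_2}^{(i)}$. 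Stacking the resulting identity over the $N$ samples then yields precisely the coefficient matrix $C(\mathcal{D})$ displayed in equation~\eqref{eq:tensor}, applied to the concatenated vector $[\mathrm{vec}(\mathrm{d}\rho_{\mathcal{X}_1}(A)); \mathrm{vec}(\mathrm{d}\rho_{\mathcal{X}_2}(A)); \mathrm{vec}(\mathrm{d}\rho_{\mathcal{Y}_1}(A)); \mathrm{vec}(\mathrm{d}\rho_{\mathcal{Y}_2}(A))]$.

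At this point the argument mirrors the singular-value analysis used in the proofs of Theorems~\ref{thm:single} and~\ref{thm:multi}: the rank-maximality assumption on $\mathcal{D}$ guarantees that the null space of $C(\mathcal{D})$ coincides with the intersection of the pointwise null spaces over all $X \in \mathcal{X}$, hence with the subspace of concatenated vectorizations corresponding to valid Lie algebra elements for $G$ acting via the tensor structure. Consequently the multiplicity of the zero singular value of $C(\mathcal{D})$ equals the dimension $D$ of $\mathfrak{g}$, and the corresponding right singular vectors, being automatically orthonormal, furnish the claimed orthogonal basis of Lie algebra representations on $\{\mathcal{X}_i\}_{i=1}^2$ and $\{\mathcal{Y}_i\}_{i=1}^2$.

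The hard part will be the bookkeeping in the second paragraph: translating the abstract tensor-product form $\mathrm{d}\rho_{\mathcal{X}_1}(A) \otimes I_{n_2} + I_{n_1} \otimes \mathrm{d}\rho_{\mathcal{X}_2}(A)$ into the explicit row- and column-wise coefficient blocks $C_{x_1}^{(i)}, C_{x_2}^{(i)}$ requires a consistent choice of vectorization convention. In particular, one must ensure that $\nabla f(X_{\cdot k})$ and $\nabla f(X_{k \cdot})$ are interpreted compatibly with the convention used to unfold $\mathrm{vec}(X)$, and that the output-side identity producing $C_{y_2}^{(i)}$ correctly absorbs the row-block structure induced by $I_{m_1} \otimes \mathrm{d}\rho_{\mathcal{Y}_2}(A)$. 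Once these identifications are pinned down, the remainder of the argument follows directly from the template already established for the single- and multi-channel cases.
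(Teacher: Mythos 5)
Your proposal matches the paper's proof essentially step for step: substitute the tensor-product form $\mathrm{d}\rho_{\mathcal{X}}(A) = \mathrm{d}\rho_{\mathcal{X}_1}(A)\otimes I_{n_2} + I_{n_1}\otimes\mathrm{d}\rho_{\mathcal{X}_2}(A)$ (and its output-space analogue) into equation~\eqref{eq:measure2}, split into four vectorized contributions via Kronecker identities to obtain the blocks $C_{x_1}^{(i)}, C_{x_2}^{(i)}, C_{y_1}^{(i)}, C_{y_2}^{(i)}$, stack over samples, and then invoke the rank-maximality/SVD argument already used for Theorems~\ref{thm:single} and~\ref{thm:multi}. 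The one detail you correctly flag — the vectorization convention — does matter (the paper's block $C_{y_1}^{(i)}=I_{m_1}\otimes F^T(X^{(i)})$ is the row-major form, whereas the identity you quote is the column-major one), but you explicitly note the need to pin this down, so this is a presentational rather than substantive difference.
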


We provide the complete proof of Theorem~\ref{thm:tensor} in \ref{sec:proof-tensor}. The expression $C(\mathcal{D})^T C(\mathcal{D})$ can be computed in the same way as equation~\eqref{eq:CTC-multi}.

\section{Experiments}
\label{sec:experiments}

We evaluate \model~on tasks with symmetries. We will validate that \model~can perform well on non-uniform datasets, handle multi-channel and tensor cases, and accurately determine the number of Lie algebra bases. As mentioned in Section~\ref{sec:related-work}, many previous works attempted to discover symmetries \citep{benton2020learning,zhou2020meta,desai2022symmetry,moskalev2022liegg}, but LieGAN \citep{yang2023generative} has already proven to achieve state-of-the-art results. Therefore, we only compare with LieGAN.

\subsection{Quantitative criteria}
\label{sec:quantitative}

We hope to quantitatively analyze the accuracy of Lie algebra bases. The ideal Lie algebra bases span the same space as the real Lie algebra space and they are pairwise orthogonal. Therefore, we will define space error and orthogonality error to evaluate the symmetry discovery method. Suppose $\{v_i\}_{i=1}^{D}$ are the normalized vector expansion forms of the discovered Lie algebra bases, with their ground truth being $\{v_i^*\}_{i=1}^{D^*}$.

\subsubsection{Space error}
Two sets of bases $\{v_i\}_{i=1}^D$ and $\{v_i^*\}_{i=1}^{D^*}$ spanning the same space in $\mathbb{R}^n$ means that each $v_i$ can be linearly represented by $\{v_i^*\}_{i=1}^{D^*}$, and conversely, each $v_i^*$ can also be linearly represented by $\{v_i\}_{i=1}^D$. Formally:
\begin{equation}
	\label{eq:over-determined}
	\begin{cases}
		\exists X \in \mathbb{R}^{D \times D^*}: \quad V^* = V X, \\
		\exists Y \in \mathbb{R}^{D^* \times D}: \quad V = V^* Y,
	\end{cases}
\end{equation}
where $V \in \mathbb{R}^{n \times D}$ and $V^* \in \mathbb{R}^{n \times D^*}$ are matrices with $\{v_i\}_{i=1}^D$ and $\{v_i^*\}_{i=1}^{D^*}$ as column vectors, respectively.

Therefore, we can define the space error as the residual of the overdetermined system of linear equations~\eqref{eq:over-determined}:
\begin{equation*}
	E_{space} = \min_X ||VX - V^*||_F^2 + \min_Y ||V^* Y - V||_F^2,
\end{equation*}
where the optimal solution can be obtained by solving the linear equation systems $V^T V^* = V^T V X$ and $V^{*T} V = V^{*T} V^* Y$.

\subsubsection{Orthogonality error}

Evaluating the accuracy of basis vectors solely using space error is insufficient. For instance, 9 basis vectors can also span the correct 7-dimensional subspace, while two of them are redundant. Therefore, we need to penalize such cases with orthogonal error. We define orthogonal error as the sum of the absolute values of the inner products of the basis vectors pairwise:
\begin{equation*}
	E_{orth} = \sum_{i=1}^{D-1} \sum_{j=i+1}^D |\langle v_i, v_j \rangle|.
\end{equation*}

\subsection{Two-body problem}
\label{sec:experiments-2body}

We first consider the two-body problem \citep{greydanus2019hamiltonian}. It studies the motion of two particles on a plane under the influence of gravity, with their center of mass as the origin. The task takes the motion states $q_1, p_1, q_2, p_2$ of the two particles at time $t$ as inputs and predicts the motion states of the two particles at time $t+1$, where $q_i \in \mathbb{R}^2$ and $p_i \in \mathbb{R}^2$ represent the position and momentum coordinates of the $i$-th particle, respectively.

The two-body problem has $\mathrm{SO(2)}$ equivariance, meaning that rotating the input at time $t$ by a certain angle will correspondingly rotate the prediction at time $t+1$ by the same angle. The four channels $q_1, p_1, q_2, p_2$ undergo group transformations independently. Formally, the number of Lie algebra basis is $D=1$, and $\mathrm{d} \rho_{\mathcal{X}}(A) = \mathrm{d} \rho_{\mathcal{Y}}(A) = \bigoplus_{i=1}^4 
\begin{bmatrix}
	0 & -1 \\
	1 & 0
\end{bmatrix}
$.

We will validate \model~on the two-body problem to demonstrate its ability to handle multi-channel cases. Additionally, we will shuffle the data distribution to evaluate its performance on the non-uniform dataset. For comparison, we specify the Lie algebra basis number of LieGAN as 1. We provide implementation details in \ref{sec:implementation-2body}. We also present the space error and orthogonality error in Table~\ref{tab:2body}, and the visualization results in Figure~\ref{fig:2body}.

\begin{table}[ht]
	\centering
	\resizebox{\linewidth}{!}{
	\begin{tabular}{c|c|ll|ll}
		\toprule
		Dataset & Method & \multicolumn{2}{c|}{Space error} & \multicolumn{2}{|c}{Orthogonality error} \\
		\midrule
		\multirow{2}{*}{Uniform} & \model & $2.65 \times 10^{-2}$ & \textcolor{gray}{$(4.35 \times 10^{-4} \sim 1.04 \times 10^{-1})$} & $0$ & \textcolor{gray}{$(0 \sim 0)$} \\
		& LieGAN & $3.33 \times 10^{-4}$ & \textcolor{gray}{$(1.97 \times 10^{-4} \sim 5.93 \times 10^{-4})$} & $0$ & \textcolor{gray}{$(0 \sim 0)$} \\
		\midrule
		\multirow{2}{*}{Non-uniform} & \model & $7.17 \times 10^{-2}$ & \textcolor{gray}{$(2.50 \times 10^{-2} \sim 1.85 \times 10^{-1})$} & $0$ & \textcolor{gray}{$(0 \sim 0)$} \\
		& LieGAN & $9.35 \times 10^{-1}$ & \textcolor{gray}{$(5.97 \times 10^{-1} \sim 1.35)$} & $0$ & \textcolor{gray}{$(0 \sim 0)$} \\
		\bottomrule
	\end{tabular}}
	\caption{The space error and orthogonality error of symmetry discovery in the two-body problem. We present the results in the format of averages \textcolor{gray}{(minimum $\sim$ maximum)}. \model~exhibits strong robustness on the non-uniform dataset.}
	\label{tab:2body}
\end{table}

\begin{figure}[ht]
	\centering
	\begin{subfigure}[b]{.3\textwidth}
		\includegraphics[width=\textwidth]{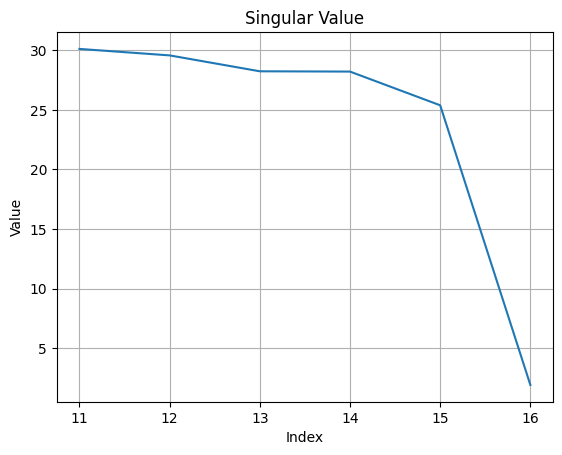}
		\caption{Uniform, singular value}
	\end{subfigure}
	\hfill
	\begin{subfigure}[b]{.3\textwidth}
		\includegraphics[width=\textwidth]{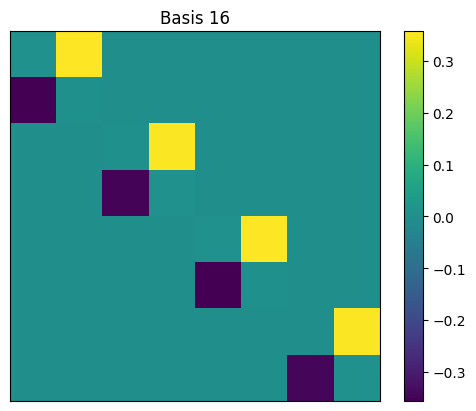}
		\caption{Uniform, \model}
	\end{subfigure}
	\hfill
	\begin{subfigure}[b]{.3\textwidth}
		\includegraphics[width=\textwidth]{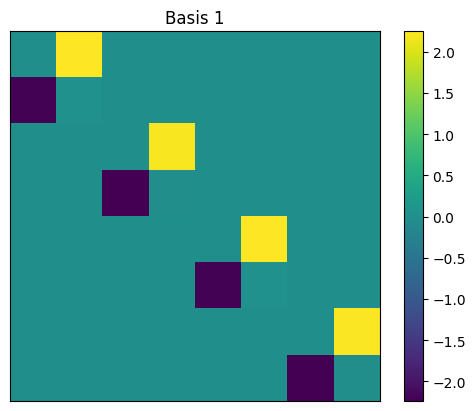}
		\caption{Uniform, LieGAN}
	\end{subfigure}
	\\
	\begin{subfigure}[b]{.3\textwidth}
		\includegraphics[width=\textwidth]{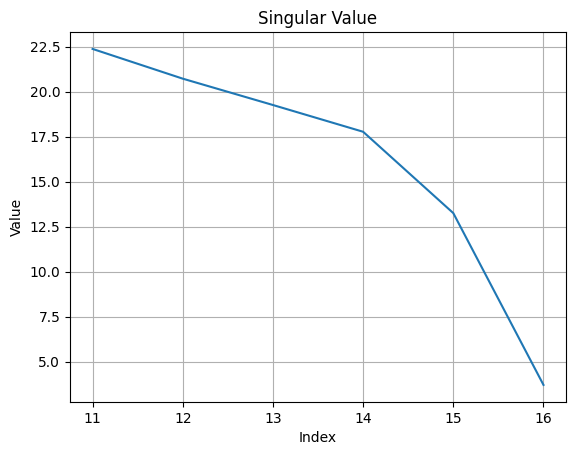}
		\caption{Non-uniform, singular value}
	\end{subfigure}
	\hfill
	\begin{subfigure}[b]{.3\textwidth}
		\includegraphics[width=\textwidth]{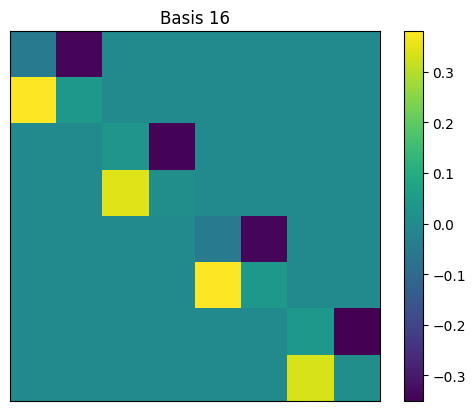}
		\caption{Non-uniform, \model}
	\end{subfigure}
	\hfill
	\begin{subfigure}[b]{.3\textwidth}
		\includegraphics[width=\textwidth]{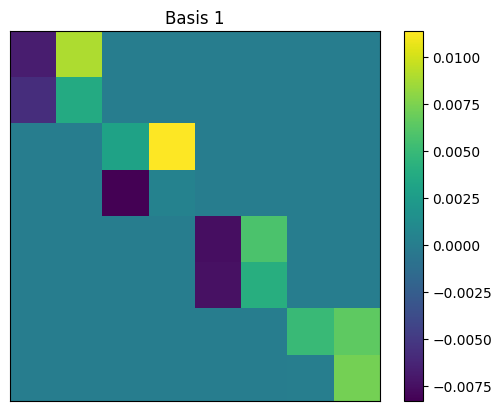}
		\caption{Non-uniform, LieGAN}
	\end{subfigure}
	\caption{The visualization results of symmetry discovery in the two-body problem. (a): The top 6 smallest singular values solved by \model~on the uniform dataset, which are arranged in descending order. (b): The Lie algebra basis corresponding to the minimum singular value solved by \model~on the uniform dataset. (c): The Lie algebra basis learned by LieGAN with 1 channel on the uniform dataset. (d-f): Results corresponding to (a-c) on the non-uniform dataset.}
	\label{fig:2body}
\end{figure}

As shown in Table~\ref{tab:2body} and Figure~\ref{fig:2body}, both \model~and LieGAN can discover the correct Lie algebra basis on the uniform dataset. Note that when the Lie algebra basis differs by a coefficient, the space it spans remains the same. There is only one Lie algebra basis, so the orthogonality error is always zero. Furthermore, \model~identifies one singular value that is almost zero, indicating that the number of Lie algebra basis is $D=1$, whereas LieGAN requires manual specification. When we disrupt the data distribution, LieGAN fails, but \model~still performs well, demonstrating stronger robustness on the non-uniform dataset.

\subsection{The moment of inertia matrix prediction}
\label{sec:experiments-inertia}

We will now explore symmetries in predicting the moment of inertia matrix \citep{finzi2021practical}. Given the masses $m_i \in \mathbb{R}$ and position coordinates $x_i \in \mathbb{R}^3$ of several particles on a rigid body, the moment of inertia matrix is defined as $M(\{m_i, x_i\}_i) = \sum_i m_i (x_i^T x_i I - x_i x_i^T)$. This task exhibits $\mathrm{SO(3)}$ and scaling equivariance. For $R \in \mathrm{SO(3)}$ and $S \in \{s I | s \in \mathbb{R}\}$, $M(\{m_i, R x_i\}_i) = R M(\{m_i, x_i\}_i) R^T$ and $M(\{m_i, S x_i\}_i) = S M(\{m_i, x_i\}_i) S^T$ hold. Formally, the input is a multi-channel vector $\mathcal{X} = \bigoplus_i \mathcal{X}_i$, and the output is a matrix $\mathcal{Y} = \mathcal{Y}_1 \otimes \mathcal{Y}_2$, where $\mathcal{X}_i = \mathbb{R}^3$ and $\mathcal{Y}_i = \mathbb{R}^3$. 

We use the moment of inertia matrix prediction to verify that \model~can handle tensor cases. We also perform an ablation study to demonstrate that processing the output as a matrix rather than an unfolded vector will be better. The implementation details can be found in \ref{sec:implementation-inertia}. We also present the space error and orthogonality error in Table~\ref{tab:inertia}, and the visualization results in Figures \ref{fig:inertia} and \ref{fig:ablation}.

\begin{table}[ht]
	\centering
	\resizebox{\linewidth}{!}{
	\begin{tabular}{c|ll|ll}
		\toprule
		Method & \multicolumn{2}{c|}{Space error} & \multicolumn{2}{|c}{Orthogonality error} \\
		\midrule
		\model, tensor & $7.08 \times 10^{-5}$ & \textcolor{gray}{$(6.66 \times 10^{-5} \sim 7.55 \times 10^{-5})$} & $1.17 \times 10^{-1}$ & \textcolor{gray}{$(5.91 \times 10^{-2} \sim 2.22 \times 10^{-1})$} \\
		\model, vector & $3.44 \times 10^1$ & \textcolor{gray}{$(3.43 \times 10^1 \sim 3.45 \times 10^1)$} & $4.56 \times 10^{-5}$ & \textcolor{gray}{$(1.78 \times 10^{-5} \sim 8.88 \times 10^{-5})$} \\
		\bottomrule
	\end{tabular}}
	\caption{The space error and orthogonality error of symmetry discovery in predicting the moment of inertia matrix. We present the results in the format of averages \textcolor{gray}{(minimum $\sim$ maximum)}. Using the tensor form of \model~allows for accurate computation of the Lie algebra space, while using the vector form of \model~fails.}
	\label{tab:inertia}
\end{table}

\begin{figure}[ht]
	\centering
	\begin{subfigure}[b]{.45\textwidth}
		\includegraphics[width=\textwidth]{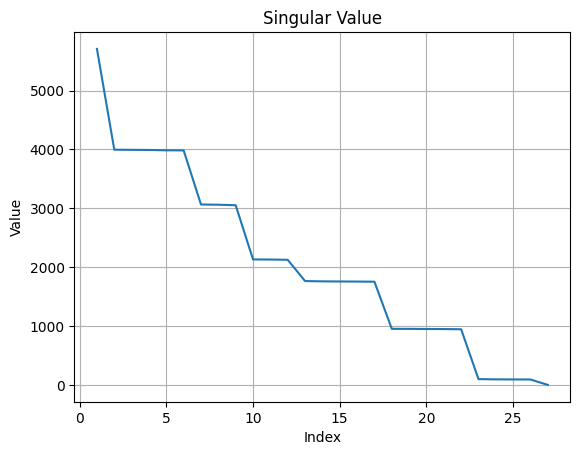}
		\caption{Singular value}
	\end{subfigure}
	\hfill
	\begin{subfigure}[b]{.45\textwidth}
		\includegraphics[width=\textwidth]{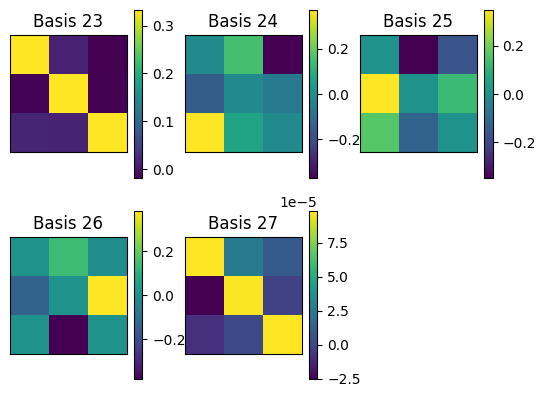}
		\caption{Lie algebra basis representation in $\mathcal{X}_i$}
	\end{subfigure}
	\\
	\begin{subfigure}[b]{.45\textwidth}
		\includegraphics[width=\textwidth]{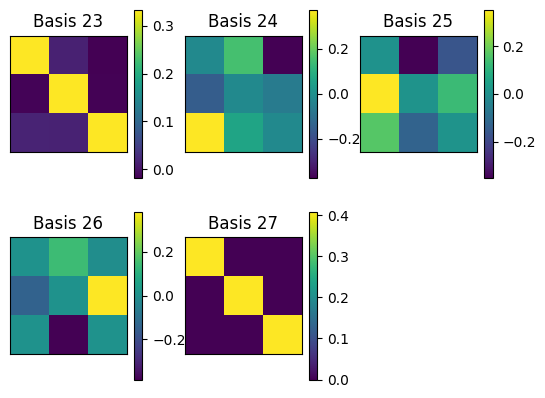}
		\caption{Lie algebra basis representation in $\mathcal{Y}_1$}
	\end{subfigure}
	\hfill
	\begin{subfigure}[b]{.45\textwidth}
		\includegraphics[width=\textwidth]{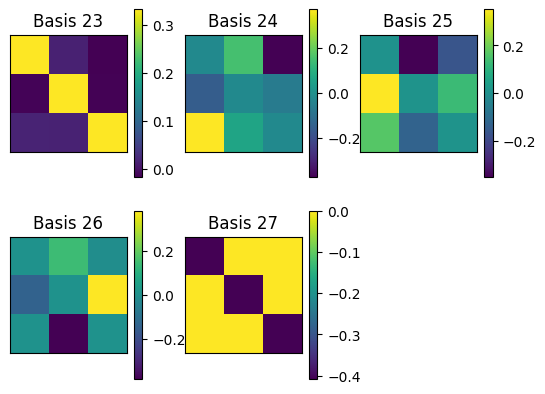}
		\caption{Lie algebra basis representation in $\mathcal{Y}_2$}
	\end{subfigure}
	\caption{The visualization results of symmetry discovery in predicting the moment of inertia matrix of \model. (a): The computed singular values, which are arranged in descending order. (b-d): Lie algebra bases corresponding to five nearly zero singular values in spaces $\mathcal{X}_i, \mathcal{Y}_1, \mathcal{Y}_2$, where basis $i$ corresponds to the singular value with index $i$.}
	\label{fig:inertia}
\end{figure}

\begin{figure}[ht]
	\centering
	\begin{subfigure}[b]{.45\textwidth}
		\includegraphics[width=\textwidth]{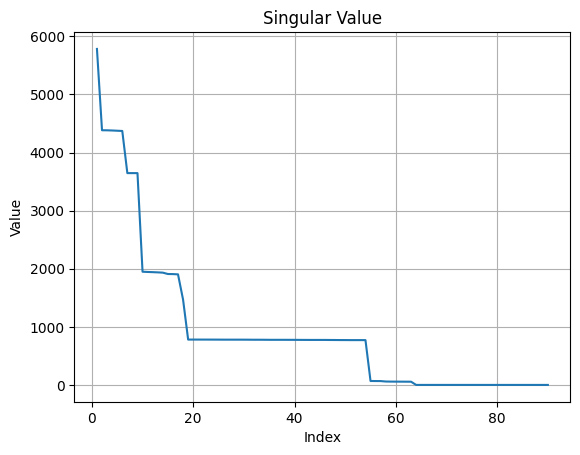}
		\caption{Singular value}
	\end{subfigure}
	\hfill
	\begin{subfigure}[b]{.45\textwidth}
		\includegraphics[width=\textwidth]{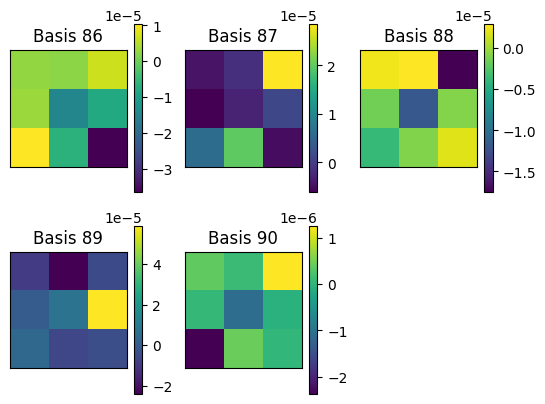}
		\caption{Lie algebra basis representation in $\mathcal{X}_i$}
	\end{subfigure}
	\\
	\begin{subfigure}[b]{.45\textwidth}
		\includegraphics[width=\textwidth]{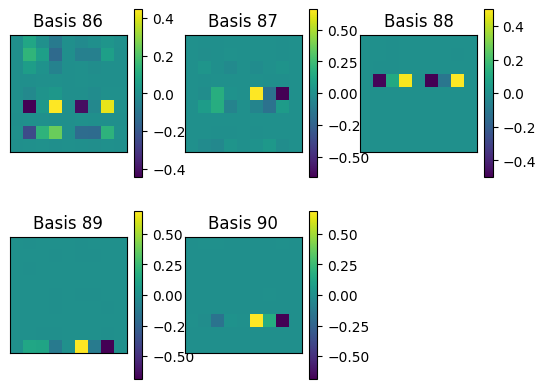}
		\caption{Lie algebra basis representation in $\mathcal{Y}$}
	\end{subfigure}
	\caption{The visualization results of the ablation study on symmetry discovery in predicting the moment of inertia matrix. (a): The computed singular values, which are arranged in descending order. (b-c): Lie algebra bases corresponding to the five smallest singular values in spaces $\mathcal{X}_i$ and $\mathcal{Y}$, where basis $i$ corresponds to the singular value with index $i$.}
	\label{fig:ablation}
\end{figure}

As shown in Figure~\ref{fig:inertia}, \model~finds 5 singular values that are almost zeros. Bases 24-26 correspond to $\mathrm{SO(3)}$ equivariance, and basis 23 corresponds to scaling equivariance. For basis 27, $\mathrm{d} \rho_{\mathcal{Y}}(A_1) = \mathrm{d} \rho_{\mathcal{Y}1}(A_1) \overline{\oplus} \mathrm{d} \rho_{\mathcal{Y}_2}(A_1) = \mathrm{diag}(a, a, a) \overline{\oplus} \mathrm{diag}(-a, -a, -a) = \mathbf{0}$ and $\mathrm{d} \rho_{\mathcal{X}}(A_1) = \mathbf{0}$. Therefore, basis 27 corresponds to the case where group representations of the input and output spaces are both trivial.

On the other hand, when we treat the moment of inertia matrix as a flattened vector, \model~computes 26 nearly zero singular values and obtains a series of meaningless Lie algebra bases as shown in Figure~\ref{fig:ablation}. In Table~\ref{tab:inertia}, we can confirm that using the tensor form of \model~allows us to solve the correct Lie algebra space, while using the vector form of \model~leads to completely incorrect Lie algebra space. Therefore, when the input or output is in tensor form, employing Theorem~\ref{thm:tensor} instead of flattening into vectors and then using Theorem~\ref{thm:single} to discover symmetries (essentially imposing formal constraints on Lie algebra representations) can help us more accurately identify the correct Lie algebra space. The reason why the orthogonal error in the tensor case is greater than in the vector case is that the orthogonality of $\mathrm{d} \rho_{\mathcal{Y}_1}(A)$ and $\mathrm{d} \rho_{\mathcal{Y}_2}(A)$ does not guarantee the orthogonality of $\mathrm{d} \rho_{\mathcal{Y}}(A) = \mathrm{d} \rho_{\mathcal{Y}_1}(A) \overline{\oplus} \mathrm{d} \rho_{\mathcal{Y}_2}(A)$. 

\subsection{Top quark tagging}
\label{sec:experiments-top-quark-tagging}

Top quark tagging \citep{yang2023generative} is a task of classifying hadronic tops from QCD background. It takes as input the four-momenta $p^{\mu}_i = (p^0_i, p^1_i, p^2_i, p^3_i) \in \mathbb{R}^4$ of the 20 jet constituents with the highest transverse momentum $p_T$ produced in particle collisions and predicts labels for the particles, where $p^0_i \in \mathbb{R}$ is the energy component of the $i$-th jet constituent, and $(p^1_i, p^2_i, p^3_i) \in \mathbb{R}^3$ are the spatial momentum components. This task possesses $\mathrm{SO(1,3)^+}$ and scaling invariance. Rotations in three-dimensional space, boosts, and scaling transformations will not alter the classification result.

We use top quark tagging to illustrate the advantage of \model~in accurately determining the number of Lie algebra bases, while LieGAN requires manual specification. We will set different numbers of Lie algebra bases for LieGAN and compare the differences in their results. We provide implementation details in \ref{sec:implementation-top-quark-tagging}. We also present the space error and orthogonality error in Table~\ref{tab:top-quark-tagging}, and the visualization results in Figure~\ref{fig:top-quark-tagging}.

\begin{table}[ht]
	\centering
	\resizebox{\linewidth}{!}{
	\begin{tabular}{c|ll|ll}
		\toprule
		Method & \multicolumn{2}{c|}{Space error} & \multicolumn{2}{|c}{Orthogonality error} \\
		\midrule
		\model & $1.47 \times 10^{-3}$ & \textcolor{gray}{$(1.19 \times 10^{-3} \sim 2.11 \times 10^{-3})$} & $1.72 \times 10^{-6}$ & \textcolor{gray}{$(1.46 \times 10^{-6} \sim 2.07 \times 10^{-6})$} \\
		LieGAN, 7 channels & $2.49$ & \textcolor{gray}{$(1.28 \sim 4.07)$} & $5.82 \times 10^{-1}$ & \textcolor{gray}{$(2.14 \times 10^{-1} \sim 1.26)$} \\
		LieGAN, 9 channels & $2.22$ & \textcolor{gray}{$(1.65 \sim 2.75)$} & $1.13$ & \textcolor{gray}{$(5.54 \times 10^{-1} \sim 1.43)$} \\
		\bottomrule
	\end{tabular}}
	\caption{The space error and orthogonality error of symmetry discovery in top quark tagging. We present the results in the format of averages \textcolor{gray}{(minimum $\sim$ maximum)}. Compared with LieGAN, the Lie algebra space solved by \model~is more accurate, and the Lie algebra bases are almost strictly orthogonal. What's more, incorrectly specifying the number of Lie algebra bases for LieGAN will make the results even worse.}
	\label{tab:top-quark-tagging}
\end{table}

\begin{figure}[ht]
	\centering
	\begin{subfigure}[b]{.45\textwidth}
		\includegraphics[width=\textwidth]{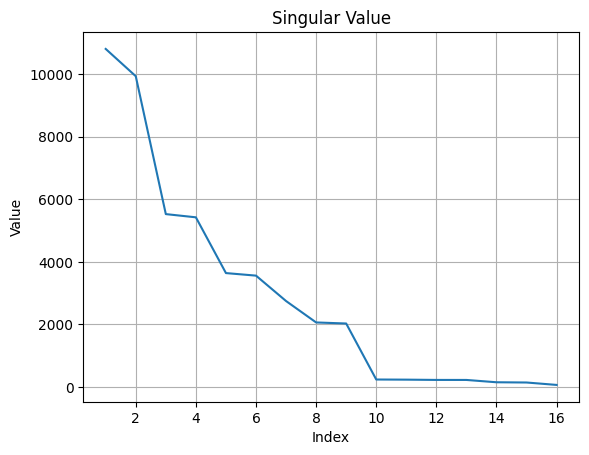}
		\caption{\model, singular value}
		\label{fig:top-quark-tagging-a}
	\end{subfigure}
	\hfill
	\begin{subfigure}[b]{.45\textwidth}
		\includegraphics[width=\textwidth]{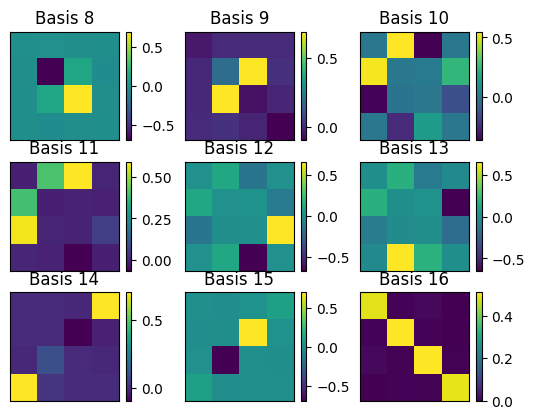}
		\caption{\model, Lie algebra bases}
		\label{fig:top-quark-tagging-b}
	\end{subfigure}
	\\
	\begin{subfigure}[b]{.45\textwidth}
		\includegraphics[width=\textwidth]{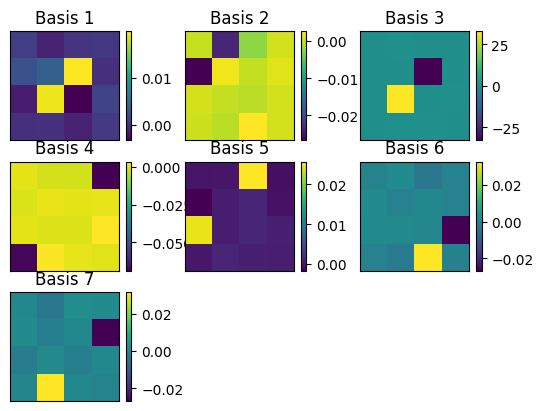}
		\caption{LieGAN, 7 channels}
		\label{fig:top-quark-tagging-c}
	\end{subfigure}
	\hfill
	\begin{subfigure}[b]{.45\textwidth}
		\includegraphics[width=\textwidth]{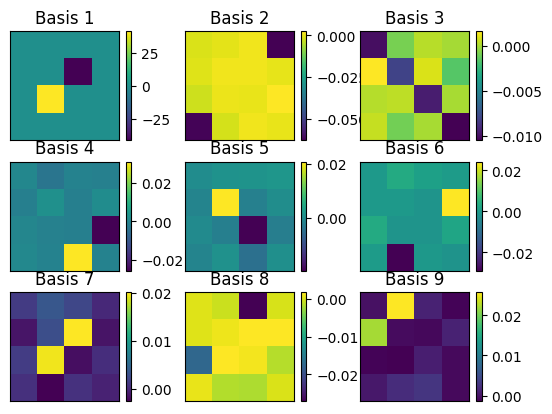}
		\caption{LieGAN, 9 channels}
		\label{fig:top-quark-tagging-d}
	\end{subfigure}
	\caption{The visualization results of symmetry discovery in top quark tagging. (a): The singular values obtained by \model, which are sorted in descending order. (b): Lie algebra bases solved by \model, where basis $i$ corresponds to the $i$-th largest singular value. (c): The Lie algebra bases learned by LieGAN with 7 channels. (d): The Lie algebra bases learned by LieGAN with 9 channels.}
	\label{fig:top-quark-tagging}
\end{figure}

As shown in Figure~\ref{fig:top-quark-tagging}, \model~solves for 7 singular values that are almost zeros, indicating that the number of Lie algebra bases is 7. Therefore, bases 10-16 in Figure~\ref{fig:top-quark-tagging-b} are correct while bases 8-9 are not. Among them, basis 16 corresponds to scaling, bases 12, 13 and 15 correspond to rotations in three-dimensional space, and bases 10, 11 and 14 correspond to boosts. On the other hand, although we accurately set the number of Lie algebra bases for LieGAN to 7, \model~can still outperform it in terms of accuracy and orthogonality as shown in Table~\ref{tab:top-quark-tagging}. Furthermore, when we mistakenly set its number of Lie algebra bases to 9, LieGAN results in worse performance. It learns incorrect bases 5 and 7 as shown in Figure~\ref{fig:top-quark-tagging-d}, and we cannot distinguish them. In short, the Lie algebra bases obtained by \model~are ordered, and the singular value can tell us which ones are valid, while LieGAN cannot achieve this because the Lie algebra bases it obtains are disordered.

\section{Conclusion}

This work proposes a mathematical approach to discover symmetries via trained neural networks. The theorem we propose allows for measuring the equivariance in neural networks without the need for data augmentation or group sampling. Based on this theorem, we obtain the Lie algebra space by solving a system of linear equations, where the number of zero singular values corresponds to the dimension of the Lie algebra space. Furthermore, we extend the method to the multi-channel and tensor cases to adapt to different data types. Compared with methods based on GANs that discover symmetries from the dataset, our approach does not rely on data distribution, thus performing well on non-uniform datasets.

\appendix
\section{Complete proofs}
\label{sec:proofs}

\subsection{Proof of Theorem~\ref{thm:measure}}
\label{sec:proof-measure}

\measure*

\begin{proof}
	First, we prove equation~\eqref{eq:measure1} $\Rightarrow$ equation~\eqref{eq:measure2}. Substitute equation~\eqref{eq:representation} into equation~\eqref{eq:measure1}, and note that different coefficients of the Lie algebra bases correspond to different group elements:
	\begin{align}
		\label{eq:proof1}
		& \forall \alpha \in \mathbb{R}^D, x \in \mathcal{X}: \nonumber \\
		& h(\alpha) = \mathrm{exp} \left( \sum_{i=1}^D \alpha_i \mathrm{d} \rho_{\mathcal{Y}}(A_i) \right) f(x) - f \left( \mathrm{exp} \left(\sum_{i=1}^D \alpha_i \mathrm{d} \rho_{\mathcal{X}}(A_i) \right) x \right) = 0.
	\end{align}
	Taking the derivative element-wise with respect to $\alpha$, and then setting $\alpha=0$, we obtain:
	\begin{equation*}
		\forall i \in \{1, 2, \dots, D\}, x \in \mathcal{X}: \quad \frac{\partial h}{\partial \alpha_i} \bigg\vert_{\alpha=0} = \mathrm{d} \rho_{\mathcal{Y}}(A_i) f(x) - \nabla f(x) \mathrm{d} \rho_{\mathcal{X}}(A_i) x = 0.
	\end{equation*}

	Then, we prove equation~\eqref{eq:measure2} $\Rightarrow$ equation~\eqref{eq:measure1}. For convenience, we denote $B = \sum_{i=1}^D \alpha_i A_i$. By the linearity of the Lie algebra representation, we have $\mathrm{d} \rho_{\mathcal{X}}(B) = \sum_{i=1}^D \alpha_i \mathrm{d} \rho_{\mathcal{X}}(A_i)$ and $\mathrm{d} \rho_{\mathcal{Y}}(B) = \sum_{i=1}^D \alpha_i \mathrm{d} \rho_{\mathcal{Y}}(A_i)$. What's more:
	\begin{equation*}
		\mathrm{d} \rho_{\mathcal{Y}}(B) f(x) = \nabla f(x) \mathrm{d} \rho_{\mathcal{X}}(B) x.
	\end{equation*}
	Replace $x$ with $\mathrm{exp}(\beta \cdot \mathrm{d} \rho_{\mathcal{X}}(B)) x$:
	\begin{equation}
		\label{eq:proof2}
		\mathrm{d} \rho_{\mathcal{Y}}(B) f(\mathrm{exp}(\beta \cdot \mathrm{d} \rho_{\mathcal{X}}(B)) x) = \nabla f(\mathrm{exp}(\beta \cdot \mathrm{d} \rho_{\mathcal{X}}(B)) x) \mathrm{d} \rho_{\mathcal{X}}(B) \mathrm{exp}(\beta \cdot \mathrm{d} \rho_{\mathcal{X}}(B)) x.
	\end{equation}
	Define $\tilde{h}: \mathbb{R} \rightarrow \mathbb{R}^m$ as:
	\begin{equation*}
		\tilde{h}(\beta) = \mathrm{exp}(\beta \cdot \mathrm{d} \rho_{\mathcal{Y}}(B)) f(x) - f(\mathrm{exp}(\beta \cdot \mathrm{d} \rho_{\mathcal{X}}(B)) x).
	\end{equation*}
	Taking the derivative with respect to $\beta$:
	\begin{align*}
		\frac{\mathrm{d} \tilde{h}}{\mathrm{d} \beta} &= \mathrm{d} \rho_{\mathcal{Y}}(B) \mathrm{exp}(\beta \cdot \mathrm{d} \rho_{\mathcal{Y}}(B)) f(x) \\
		& - \nabla f(\mathrm{exp}(\beta \cdot \mathrm{d} \rho_{\mathcal{X}}(B)) x) \mathrm{d} \rho_{\mathcal{X}}(B) \mathrm{exp}(\beta \cdot \mathrm{d} \rho_{\mathcal{X}}(B)) x.
	\end{align*}
	Substitute equation~\eqref{eq:proof2}:
	\begin{align*}
		\frac{\mathrm{d} \tilde{h}}{\mathrm{d} \beta} &= \mathrm{d} \rho_{\mathcal{Y}}(B) \mathrm{exp}(\beta \cdot \mathrm{d} \rho_{\mathcal{Y}}(B)) f(x) - \mathrm{d} \rho_{\mathcal{Y}}(B) f(\mathrm{exp}(\beta \cdot \mathrm{d} \rho_{\mathcal{X}}(B)) x) \\
		&= \mathrm{d} \rho_{\mathcal{Y}}(B) \tilde{h}(\beta).
	\end{align*}
	Solving the differential equation, and note that $\tilde{h}(0)=0$:
	\begin{equation*}
		\tilde{h}(\beta) = \mathrm{exp}(\beta \cdot \mathrm{d} \rho_{\mathcal{Y}}(B)) \tilde{h}(0) = 0.
	\end{equation*}
	Let $\beta=1$, we obtain:
	\begin{equation*}
		\mathrm{exp}(\mathrm{d} \rho_{\mathcal{Y}}(B)) f(x) - f(\mathrm{exp}(\mathrm{d} \rho_{\mathcal{X}}(B)) x) = 0,
	\end{equation*}
	which is equivalent to equation~\eqref{eq:proof1}. Therefore, equation~\eqref{eq:measure1} is proven.
\end{proof}

\subsection{Proof of Theorem~\ref{thm:single}}
\label{sec:proof-single}

\single*

\begin{proof}
	Flatten the Lie algebra representation in equation~\eqref{eq:measure2} from matrices into vectors:
	\begin{equation*}
		\forall x \in \mathcal{X}: \quad
		[I_m \otimes f^T(x)] \mathrm{vec}(\mathrm{d} \rho_{\mathcal{Y}}(A)) = [\nabla f(x) \otimes x^T] \mathrm{vec}(\mathrm{d} \rho_{\mathcal{X}}(A)).
	\end{equation*}
	Note that equivariance implies a one-to-one correspondence between the Lie algebra representations of the input space and the output space. We concatenate them together:
	\begin{equation}
		\label{eq:concat1}
		\forall x \in \mathcal{X}: \quad
		\begin{bmatrix}
			-\nabla f(x) \otimes x^T & I_m \otimes f^T(x)
		\end{bmatrix}
		\begin{bmatrix}
			\mathrm{vec}(\mathrm{d} \rho_{\mathcal{X}}(A)) \\
			\mathrm{vec}(\mathrm{d} \rho_{\mathcal{Y}}(A))
		\end{bmatrix}
		=0.
	\end{equation}
	Sample $N$ data points $\{x^{(i)}\}_{i=1}^N$ from the input space $\mathcal{X}$, then equation~\eqref{eq:concat1} holds for any element $x^{(i)}$. We obtain a system of linear equations:
	\begin{equation}
		\label{eq:linear1}
		C(\mathcal{D}) v = 
		\begin{bmatrix}
			-\nabla f(x^{(1)}) \otimes x^{(1) T} & I_m \otimes f^T(x^{(1)}) \\
			-\nabla f(x^{(2)}) \otimes x^{(2) T} & I_m \otimes f^T(x^{(2)}) \\
			\vdots & \vdots \\
			-\nabla f(x^{(N)}) \otimes x^{(N) T} & I_m \otimes f^T(x^{(N)})
		\end{bmatrix}
		\begin{bmatrix}
			\mathrm{vec}(\mathrm{d} \rho_{\mathcal{X}}(A)) \\
			\mathrm{vec}(\mathrm{d} \rho_{\mathcal{Y}}(A))
		\end{bmatrix}
		= 0.
	\end{equation}
	Obviously, $\langle \mathrm{d} \rho(A_i), \mathrm{d} \rho(A_j) \rangle = 0$ is equivalent to $\langle v_i, v_j \rangle = 0$. Therefore, the problem transforms into finding the bases $\{v_i\}_{i=1}^D$ of the subspace spanned by $v$ that satisfies equation~\eqref{eq:linear1}.
	
	Perform singular value decomposition on the coefficient matrix $C(\mathcal{D})$:
	\begin{equation*}
		C(\mathcal{D}) v = U
		\begin{bmatrix}
			\Sigma & 0 \\
			0 & 0
		\end{bmatrix}
		\begin{bmatrix}
			P^T \\
			Q^T
		\end{bmatrix}
		v = 0.
	\end{equation*}
	Then we can obtain the solution space of equation~\eqref{eq:linear1}:
	\begin{equation*}
		v = Q \beta = \sum_{i=1}^D q_i \beta_i,
	\end{equation*}
	where $\{q_i\}_{i=1}^D$ are the column vectors of matrix $Q$, and $\{\beta_i\}_{i=1}^D$ are their corresponding coefficients. In other words, the column vectors of matrix $Q$ form the bases of the solution space:
	\begin{equation*}
		\begin{bmatrix}
			\mathrm{vec}(\mathrm{d} \rho_{\mathcal{X}}(A_i)) \\
			\mathrm{vec}(\mathrm{d} \rho_{\mathcal{Y}}(A_i))
		\end{bmatrix}
		= v_i = q_i, \quad i \in \{1, 2, \dots, D\}.
	\end{equation*}
	In general, the number of zero singular values of the coefficient matrix $C(\mathcal{D})$ in equation~\eqref{eq:linear1} matches the dimension of the Lie algebra space, and the corresponding right singular vectors are the vector expansion forms of the Lie algebra bases.
	
	We are concerned that the discretization of equation~\eqref{eq:concat1} may lose information, which could lead to the solution of redundant and incorrect Lie algebra bases. In equation~\eqref{eq:linear1}, note that $v \in \mathbb{R}^{n^2 + m^2}$. If the row rank of $C(\mathcal{D})$ is $r$, then the dimension of the solution space $v$ is $n^2 + m^2 - r$. Therefore, as long as we ensure that the row rank of $C(\mathcal{D})$ reaches its maximum, that is $n^2 + m^2 - D$, we can completely obtain the correct $D$ Lie algebra bases. This assumption can be formalized as $\forall x \in \mathcal{X}: \mathrm{rank}(C(\mathcal{D} \cup \{x\})) = \mathrm{rank}(C(\mathcal{D}))$.
\end{proof}

\subsection{Proof of Theorem~\ref{thm:multi}}
\label{sec:proof-multi}

\multi*

\begin{proof}
	We can express equation~\eqref{eq:measure2} as:
	\begin{align*}
		& \left[ \bigoplus_{i=1}^{c_y} \mathrm{d} \rho_{\mathcal{Y}_i}(A) \right]
		\begin{bmatrix}
			f_1(x) \\
			f_2(x) \\
			\vdots \\
			f_{c_y}(x)
		\end{bmatrix}
		= \\ &
		\begin{bmatrix}
			\nabla f_1(x_1) & \nabla f_1(x_2) & \cdots & \nabla f_1(x_{c_x}) \\
			\nabla f_2(x_1) & \nabla f_2(x_2) & \cdots & \nabla f_2(x_{c_x}) \\
			\vdots & \vdots & \ddots & \vdots \\
			\nabla f_{c_y}(x_1) & \nabla f_{c_y}(x_2) & \cdots & \nabla f_{c_y}(x_{c_x}) \\
		\end{bmatrix}
		\left[ \bigoplus_{i=1}^{c_x} \mathrm{d} \rho_{\mathcal{X}_i}(A) \right]
		\begin{bmatrix}
			x_1 \\
			x_2 \\
			\vdots \\
			x_{c_x}
		\end{bmatrix}.
	\end{align*}
	In other words:
	\begin{equation*}
		\forall i \in \{1, 2, \dots, c_y\}, x \in \mathcal{X}: \quad \mathrm{d} \rho_{\mathcal{Y}_i}(A) f_i(x) = \sum_{j=1}^{c_x} \nabla f_i(x_j) \mathrm{d} \rho_{\mathcal{X}_j}(A) x_j.
	\end{equation*}
	Similar to the single-channel case, we flatten the Lie algebra representation from matrices to vectors:
	\begin{align}
		\label{eq:concat2}
		& \forall i \in \{1, 2, \dots, c_y\}, x \in \mathcal{X}: \nonumber \\
		& [I_{m_i} \otimes f_i^T(x)] \mathrm{vec}(\mathrm{d} \rho_{\mathcal{Y}_i}(A)) = \sum_{j=1}^{c_x} [\nabla f_i(x_j) \otimes x_j^T] \mathrm{vec}(\mathrm{d} \rho_{\mathcal{X}_j}(A)).
	\end{align}
	We discretize equation~\eqref{eq:concat2} using $N$ data samples and obtain a system of linear equations:
	\begin{equation}
		\label{eq:linear2}
		C(\mathcal{D}) v = 
		\begin{bmatrix}
			C_x^{(1)} & C_y^{(1)} \\
			C_x^{(2)} & C_y^{(2)} \\
			\vdots & \vdots \\
			C_x^{(N)} & C_y^{(N)}
		\end{bmatrix}
		\begin{bmatrix}
			v_x \\
			v_y
		\end{bmatrix}
		=0,
	\end{equation}
	where
	\begin{equation*}
		\begin{cases}
			C_x^{(i)} =
			\begin{bmatrix}
				- \nabla f_1(x_1^{(i)}) \otimes x_1^{(i) T} & - \nabla f_1(x_2^{(i)}) \otimes x_2^{(i) T} & \cdots & - \nabla f_1(x_{c_x}^{(i)}) \otimes x_{c_x}^{(i) T} \\
				- \nabla f_2(x_1^{(i)}) \otimes x_1^{(i) T} & - \nabla f_2(x_2^{(i)}) \otimes x_2^{(i) T} & \cdots & - \nabla f_2(x_{c_x}^{(i)}) \otimes x_{c_x}^{(i) T} \\
				\vdots & \vdots & \ddots & \vdots \\
				- \nabla f_{c_y}(x_1^{(i)}) \otimes x_1^{(i) T} & - \nabla f_{c_y}(x_2^{(i)}) \otimes x_2^{(i) T} & \cdots & - \nabla f_{c_y}(x_{c_x}^{(i)}) \otimes x_{c_x}^{(i) T} \\
			\end{bmatrix}, \\
			C_y^{(i)} = \mathrm{diag}[I_{m_1} \otimes f_1^T(x^{(i)}), I_{m_2} \otimes f_2^T(x^{(i)}), \dots, I_{m_{c_y}} \otimes f_{c_y}^T(x^{(i)})], \\
			v_x =
			\begin{bmatrix}
				\mathrm{vec}(\mathrm{d} \rho_{\mathcal{X}_1}(A)) \\
				\mathrm{vec}(\mathrm{d} \rho_{\mathcal{X}_2}(A)) \\
				\vdots \\
				\mathrm{vec}(\mathrm{d} \rho_{\mathcal{X}_{c_x}}(A))
			\end{bmatrix}, \quad
			v_y = 
			\begin{bmatrix}
				\mathrm{vec}(\mathrm{d} \rho_{\mathcal{Y}_1}(A)) \\
				\mathrm{vec}(\mathrm{d} \rho_{\mathcal{Y}_2}(A)) \\
				\vdots \\
				\mathrm{vec}(\mathrm{d} \rho_{\mathcal{Y}_{c_y}}(A))
			\end{bmatrix}.
		\end{cases}
	\end{equation*}
	Then we use the same method as in the single-channel case to solve equation~\eqref{eq:linear2} and obtain the Lie algebra representation for each channel.
\end{proof}

\subsection{Proof of Theorem~\ref{thm:tensor}}
\label{sec:proof-tensor}

\tensor*

\begin{proof}
	Equation~\eqref{eq:measure2} can be written as:
	\begin{equation}
		\label{eq:concat3}
		[\mathrm{d} \rho_{\mathcal{Y}_1}(A) \otimes I_{m_2} + I_{m_1} \otimes \mathrm{d} \rho_{\mathcal{Y}_2}(A)] f(x) = \nabla f(x) [\mathrm{d} \rho_{\mathcal{X}_1} \otimes I_{n_2} + I_{n_1} \otimes \mathrm{d} \rho_{\mathcal{X}_2}] x,
	\end{equation}
	where $x \in \mathbb{R}^{(n_1 \times n_2) \times 1}$ is the vectorized form of the input matrix $X \in \mathcal{X}$. To rewrite equation~\eqref{eq:concat3} in a form similar to equation~\eqref{eq:measure2}, we obtain:
	\begin{align*}
		& \mathrm{vec}(\mathrm{d} \rho_{\mathcal{Y}_1}(A) F(X)) + 
		\begin{bmatrix}
			\mathrm{d} \rho_{\mathcal{Y}_2}(A) F^T_{1 \cdot}(X) \\
			\mathrm{d} \rho_{\mathcal{Y}_2}(A) F^T_{2 \cdot}(X) \\
			\vdots \\
			\mathrm{d} \rho_{\mathcal{Y}_2}(A) F^T_{m_1 \cdot}(X)
		\end{bmatrix}
		= \\
		& \sum_{k=1}^{n_2} \nabla f(X_{\cdot k}) \mathrm{d} \rho_{\mathcal{X}_1}(A) X_{\cdot k} + \sum_{k=1}^{n_1} \nabla f(X_{k \cdot}) \mathrm{d} \rho_{\mathcal{X}_2}(A) X_{k \cdot}^T,
	\end{align*}
	Similar to equation~\eqref{eq:linear0}, we construct a system of linear equations:
	\begin{equation}
		\label{eq:linear3}
		C(\mathcal{D}) v =
		\begin{bmatrix}
			C_{x_1}^{(1)} & C_{x_2}^{(1)} & C_{y_1}^{(1)} & C_{y_2}^{(1)} \\
			C_{x_1}^{(2)} & C_{x_2}^{(2)} & C_{y_1}^{(2)} & C_{y_2}^{(2)} \\
			\vdots & \vdots & \vdots & \vdots \\
			C_{x_1}^{(N)} & C_{x_2}^{(N)} & C_{y_1}^{(N)} & C_{y_2}^{(N)}
		\end{bmatrix}
		\begin{bmatrix}
			v_{x_1} \\
			v_{x_2} \\
			v_{y_1} \\
			v_{y_2}
		\end{bmatrix}
		= 0,
	\end{equation}
	where
	\begin{equation*}
		\begin{cases}
			C_{x_1}^{(i)} = -\sum_{k=1}^{n_2} \nabla f(X_{\cdot k}^{(i)}) \otimes X_{\cdot k}^{(i) T}, \quad C_{x_2}^{(i)} = -\sum_{k=1}^{n_1} \nabla f(X_{k \cdot}^{(i)}) \otimes X_{k \cdot}^{(i)}, \\
			C_{y_1}^{(i)} = I_{m_1} \otimes F^T(X^{(i)}), \quad C_{y_2}^{(i)} = 
			\begin{bmatrix}
				I_{m_2} \otimes F_{1 \cdot}(X^{(i)}) \\
				I_{m_2} \otimes F_{2 \cdot}(X^{(i)}) \\
				\vdots \\
				I_{m_2} \otimes F_{m_1 \cdot}(X^{(i)}) \\
			\end{bmatrix}, \\
			v_{x_i} = \mathrm{vec}(\mathrm{d} \rho_{\mathcal{X}_i}(A)), \quad
			v_{y_i} = \mathrm{vec}(\mathrm{d} \rho_{\mathcal{Y}_i}(A)).
		\end{cases}
	\end{equation*}
	Therefore, we can obtain the Lie algebra space by solving equation~\eqref{eq:linear3}.
\end{proof}

\section{Implementation details}
\label{sec:implementation}

\subsection{Two-body problem}
\label{sec:implementation-2body}

The two-body problem takes the positions and momentum coordinates $q_1, p_1, q_2, p_2$ of two particles on a plane as input and output. Therefore, we use a 3-layer MLP with input dimension of 8, hidden dimension of 384, output dimension of 8, and Tanh activation function to fit the mapping of the problem. We choose the Adan optimizer \citep{xie2024adan} for training and set the number of epochs to 10. To shuffle the data distribution, for data points where $q_1$ is in the first or third quadrant, we rotate both its input and output by 90 degrees. The size of the training set is 18,000, and we use the entire training set for symmetry discovery. We perform this experiment on a single-core NVIDIA GeForce RTX 3090 GPU with available memory of 24,576 MiB, and the execution time is approximately 3 minutes.

We consider the two-body problem as a multi-channel case, where position and momentum coordinates transform independently. For the equivariance discovery of such sequence prediction problems, we hope that the group representation of the input space is the same as that of the output space. Formally, $\rho_{\mathcal{X}}(g) = \rho_{\mathcal{Y}}(g) = \rho(g)$, and $\mathrm{d} \rho_{\mathcal{X}}(A) = \mathrm{d} \rho_{\mathcal{Y}}(A) = \mathrm{d} \rho(A)$. Setting $v = v_x = v_y$ in equation~\eqref{eq:linear2}, we obtain:
\begin{equation*}
	C(\mathcal{D}) v = 
	\begin{bmatrix}
		C_x^{(1)} + C_y^{(1)} \\
		C_x^{(2)} + C_y^{(2)} \\
		\vdots \\
		C_x^{(N)} + C_y^{(N)}
	\end{bmatrix}
	v = 0.
\end{equation*}
Then, $C(\mathcal{D})^T C(\mathcal{D})$ can be computed as follows:
\begin{equation*}
	C(\mathcal{D})^T C(\mathcal{D}) = \sum_{i=1}^N (C_x^{(i)} + C_y^{(i)})^T (C_x^{(i)} + C_y^{(i)}).
\end{equation*}

When conducting quantitative error analysis, we repeat experiments with five different random seeds. The random factor for \model~is the parameter initialization of the neural network, while for LieGAN, it's the parameter initialization of the generator and discriminator. We report the average values of space error and orthogonality error in Table~\ref{tab:2body} (in Section~\ref{sec:experiments-2body}), and indicate their minimum and maximum values in gray font.

\subsection{The moment of inertia matrix prediction}
\label{sec:implementation-inertia}

For convenience, we assume that the mass of all particles on the rigid body is 1. Therefore, the problem takes the spatial coordinates of several particles $x_i \in \mathbb{R}^3$ as input and outputs the inertia tensor of the rigid body $M \in \mathbb{R}^{3 \times 3}$. In practice, we set the number of particles to 3. The input dimension of the three-layer MLP used for training is 9, the hidden dimension is 384, and the output dimension is 9, with the ReLU function used as the activation function. We train for 100 epochs using the Adan optimizer \citep{xie2024adan}. The size of the training set is 100,000, and we use $\frac{1}{10}$ of it for symmetry discovery. We perform this experiment on a single-core NVIDIA GeForce RTX 3090 GPU with available memory of 24,576 MiB, and the execution time is approximately 20 minutes.

For this problem, the input is a multi-channel vector $\mathcal{X} = \bigoplus_i \mathcal{X}_i$, and the output is a matrix $\mathcal{Y} = \mathcal{Y}_1 \otimes \mathcal{Y}_2$. Furthermore, different spatial coordinates of particles should share the same group transformation. Formally, $\rho_{\mathcal{X}_1}(g) = \dots = \rho_{\mathcal{X}_{c_x}}(g) = \rho_{\mathcal{X}}(g)$, and $\mathrm{d} \rho_{\mathcal{X}_1}(A) = \dots = \mathrm{d} \rho_{\mathcal{X}_{c_x}}(A) = \mathrm{d} \rho_{\mathcal{X}}(A)$. Combining equation~\eqref{eq:linear2} and equation~\eqref{eq:linear3}, we obtain:
\begin{equation*}
	C(\mathcal{D}) v = 
	\begin{bmatrix}
		C_x^{(1)} & C_{y_1}^{(1)}  & C_{y_2}^{(1)} \\
		C_x^{(2)} & C_{y_1}^{(2)}  & C_{y_2}^{(2)} \\
		\vdots & \vdots & \vdots \\
		C_x^{(N)} & C_{y_1}^{(N)}  & C_{y_2}^{(N)}
	\end{bmatrix}
	\begin{bmatrix}
		v_x \\
		v_{y_1} \\
		v_{y_2}
	\end{bmatrix}
	= 0,
\end{equation*}
where
\begin{equation*}
	\begin{cases}
		C_x^{(i)} = - \sum_{j=1}^{c_x} \nabla f(x_j^{(i)}) \otimes x_j^{(i) T}, \\
		C_{y_1}^{(i)} = I_{m_1} \otimes F^T(x^{(i)}), \quad C_{y_2}^{(i)} = 
		\begin{bmatrix}
			I_{m_2} \otimes F_{1 \cdot}(x^{(i)}) \\
			I_{m_2} \otimes F_{2 \cdot}(x^{(i)}) \\
			\vdots \\
			I_{m_2} \otimes F_{m_1 \cdot}(x^{(i)}) \\
		\end{bmatrix}, \\
		v_{x} = \mathrm{vec}(\mathrm{d} \rho_{\mathcal{X}}(A)), \quad
		v_{y_i} = \mathrm{vec}(\mathrm{d} \rho_{\mathcal{Y}_i}(A)).
	\end{cases}
\end{equation*}
Similarly, we compute $C^T C$ to save memory overhead:
\begin{equation*}
	C(\mathcal{D})^T C(\mathcal{D}) = 
	\begin{bmatrix}
		\sum_{i=1}^N C_x^{(i) T} C_x^{(i)} & \sum_{i=1}^N C_x^{(i) T} C_{y_1}^{(i)} & \sum_{i=1}^N C_x^{(i) T} C_{y_2}^{(i)} \\
		\sum_{i=1}^N C_{y_1}^{(i) T} C_x^{(i)} & \sum_{i=1}^N C_{y_1}^{(i) T} C_{y_1}^{(i)} & \sum_{i=1}^N C_{y_1}^{(i) T} C_{y_2}^{(i)} \\
		\sum_{i=1}^N C_{y_2}^{(i) T} C_x^{(i)} & \sum_{i=1}^N C_{y_2}^{(i) T} C_{y_1}^{(i)} & \sum_{i=1}^N C_{y_2}^{(i) T} C_{y_2}^{(i)}
	\end{bmatrix}.
\end{equation*}
Then, the real Lie algebra bases are $\{\mathrm{d} \rho_{\mathcal{X}_i}(A_j)\}_j = \{\mathrm{d} \rho_{\mathcal{Y}_1}(A_j)\}_j = \{\mathrm{d} \rho_{\mathcal{Y}_2}(A_j)\}_j = \{
\begin{bmatrix}
	0 & 1 & 0 \\
	-1 & 0 & 0 \\
	0 & 0 & 0
\end{bmatrix},
\begin{bmatrix}
	0 & 0 & 1 \\
	0 & 0 & 0 \\
	-1 & 0 & 0
\end{bmatrix},
\begin{bmatrix}
	0 & 0 & 0 \\
	0 & 0 & 1 \\
	0 & -1 & 0
\end{bmatrix},
\begin{bmatrix}
	1 & 0 & 0 \\
	0 & 1 & 0 \\
	0 & 0 & 1
\end{bmatrix}
\}$.
In the ablation study, we process the output as a single-channel vector, which means the group representation on $\mathcal{Y}$ is a $9 \times 9$ matrix $\rho_{\mathcal{Y}}(g) \in \mathbb{R}^{9 \times 9}$ instead of decomposing it into two $3 \times 3$ matrices $\rho_{\mathcal{Y}_1}(g) \in \mathbb{R}^{3 \times 3}$ and $\rho_{\mathcal{Y}_2}(g) \in \mathbb{R}^{3 \times 3}$. Then we transform equation~\eqref{eq:linear2} into:
\begin{equation*}
	C(\mathcal{D}) v = 
	\begin{bmatrix}
		- \sum_{j=1}^{c_x} \nabla f(x_j^{(1)}) \otimes x_j^{(1) T} & I_m \otimes f^T(x^{(1)}) \\
		- \sum_{j=1}^{c_x} \nabla f(x_j^{(2)}) \otimes x_j^{(2) T} & I_m \otimes f^T(x^{(2)}) \\
		\vdots & \vdots \\
		- \sum_{j=1}^{c_x} \nabla f(x_j^{(N)}) \otimes x_j^{(N) T} & I_m \otimes f^T(x^{(N)}) \\
	\end{bmatrix}
	\begin{bmatrix}
		\mathrm{vec}(\mathrm{d} \rho_{\mathcal{X}}(A)) \\
		\mathrm{vec}(\mathrm{d} \rho_{\mathcal{Y}}(A))
	\end{bmatrix}
	= 0.
\end{equation*}

For quantitative error analysis, similar to Table~\ref{tab:2body} (in Section~\ref{sec:experiments-2body}), we conduct five repeated experiments with different random seeds and report the average, minimum, and maximum values in Table~\ref{tab:inertia} (in Section~\ref{sec:experiments-inertia}).

\subsection{Top quark tagging}
\label{sec:implementation-top-quark-tagging}

Top quark tagging takes the four-momenta $p_i^\mu \in \mathbb{R}^4$ of 20 jet constituents as input and predicts labels for the particles as output. We set the input dimension of a 3-layer MLP to 80, the hidden dimension to 200, and the output dimension to 1, choosing the ReLU function as the activation function. We use the Adan optimizer \citep{xie2024adan} for training, with the number of epochs set to 100. The size of the training set is 1,211,000, and we use $\frac{1}{10}$ of the training set for symmetry discovery. We perform this experiment on a single-core NVIDIA GeForce RTX 3090 GPU with available memory of 24,576 MiB, and the execution time is approximately 2 hours.

For such problems with invariance, we strictly require $\rho_{\mathcal{Y}}(g) = I_m$, i.e., $\mathrm{d} \rho_{\mathcal{Y}}(A) = 0$. Additionally, the four-momentum of different jet constituents should share the same group representation. Setting $v_y = 0$ and $\mathrm{d} \rho_{\mathcal{X}_1}(A) = \dots = \mathrm{d} \rho_{\mathcal{X}_{c_x}}(A) = \mathrm{d} \rho_{\mathcal{X}}(A)$ in equation~\eqref{eq:linear2}, we derive the system of linear equations:
\begin{equation*}
	C(\mathcal{D}) v = 
	\begin{bmatrix}
		- \sum_{j=1}^{c_x} \nabla f(x_j^{(1)}) \otimes x_j^{(1) T}\\
		- \sum_{j=1}^{c_x} \nabla f(x_j^{(2)}) \otimes x_j^{(2) T}\\
		\vdots\\
		- \sum_{j=1}^{c_x} \nabla f(x_j^{(N)}) \otimes x_j^{(N) T}\\
	\end{bmatrix}
	\mathrm{vec}(\mathrm{d} \rho_{\mathcal{X}}(A)) = 0.
\end{equation*}
The computation of $C(\mathcal{D})^T C(\mathcal{D})$ can be done as follows:
\begin{equation*}
	C(\mathcal{D})^T C(\mathcal{D}) = \sum_{i=1}^N \sum_{j=1}^{c_x} \nabla f(x_j^{(i)})^T \nabla f(x_j^{(i)}) \otimes x_j^{(i)} x_j^{(i) T}.
\end{equation*}

Then we conduct quantitative error analysis. Five different random seeds are used for \model, and three for LieGAN. We report the average, minimum, and maximum values of space error and orthogonality error in Table~\ref{tab:top-quark-tagging} (in Section~\ref{sec:experiments-top-quark-tagging}).




\bibliographystyle{elsarticle-harv}
\bibliography{references}
\end{document}